\definecolor{links}{RGB}{11, 85, 255}
\definecolor{cites}{RGB}{11, 85, 255}
\definecolor{urls}{RGB}{255, 116, 0}
\pgfplotsset{compat=1.14}
\theoremstyle{plain}
\newtheorem{theorem}{Theorem}[section]
\newtheorem{lemma}[theorem]{Lemma}
\newtheorem{claim}[theorem]{Claim}
\newtheorem{corollary}[theorem]{Corollary}
\theoremstyle{definition}
\theoremstyle{remark}
\newcommand{\follow}{\textsc{Follow-The-Leader}\xspace}
\newcommand{\MD}{\textsc{Online-Mirror-Descent}\xspace}
\newcommand{\simulation}{\textsc{Simulation}\xspace}
\newcommand{\switching}{\textsc{simulation-successive-elimination}\xspace}
\newcommand{\successive}{\textsc{successive-elimination}\xspace}
\newcommand{\VC}{d_{\textsc{VC}}}
\newcommand{\ucb}{\textsc{UCB}\xspace}
\newcommand{\cA}{\mathcal A}
\newcommand{\cD}
{\mathcal D}
\newcommand{\cE}{\mathcal E}
\newcommand{\cF}{\mathcal F}
\newcommand{\cH}{\mathcal H}
\newcommand{\cS}{\mathcal S}
\newcommand{\cX}{\mathcal X}
\newcommand{\dumb}{\textsc{Birthday-Test}\xspace}
\newcommand{\E}[1]{\mathbb E\left[#1\right]}
\newcommand{\Esub}[2]{\mathbb E_{#1}\left[#2\right]}
\renewcommand{\P}[1]{\mathbb P\left[#1\right]}
\newcommand{\Pb}{\mathbb P}
\newcommand{\tv}{\mathrm{TV}}
\newcommand{\OLP}{\mathrm{OPT_{LP}}}
\newcommand{\OLPi}{\mathrm{OPT^\mathnormal{i}_{LP}}}
\newcommand{\iid}{\mathrm{iid}}
\newcommand{\scalar}[2]{\langle #1,#2\rangle}
\newcommand{\ind}[1]{\mathbb{I}_{\left\{#1\right\}}}
\begin{document}

\title{Online Learning in the Random Order Model}
\date{}

\author{Martino Bernasconi\thanks{Bocconi University} \and Andrea Celli\footnotemark[1] \and Riccardo Colini Baldeschi\thanks{Meta, UK} \and Federico Fusco\thanks{Sapienza University of Rome} \and Stefano Leonardi\footnotemark[2] \and Matteo Russo\footnotemark[2]}

\maketitle
\thispagestyle{empty}

\begin{abstract}
    In the random-order model for online learning,
    the sequence of losses is chosen upfront by an adversary and presented to the learner 
    after a random permutation. Any random-order input is \emph{asymptotically} equivalent to a stochastic i.i.d.~one, but, for finite times, it may exhibit significant {\em non-stationarity}, which can hinder the performance of stochastic learning algorithms.
    While algorithms for adversarial inputs naturally maintain their regret guarantees in random order, simple no-regret algorithms exist for the stochastic model that fail against random-order instances.  
     
    In this paper, we propose a general template to adapt stochastic learning algorithms to the random-order model without substantially affecting their regret guarantees. This allows us to recover improved regret bounds for prediction with delays, online learning with constraints, and bandits with switching costs. Finally, we investigate online classification and prove that, in random order, learnability is characterized by the VC dimension rather than the Littlestone dimension, thus providing a further separation from the general adversarial model.
\end{abstract}

\section{Introduction}

    Random order is a natural model for online problems: an adversary generates the input, which is presented to the algorithm after a uniform random permutation. The random-order model is extensively studied across various areas of computer science and economics, such as optimal stopping problems \citep{dynkin, KS77, PT22}, online matching and matroid selection \citep{BIK18, KP09, DP08, EzraFGT22}, and online approximation algorithms \citep{Gupta_Singla_2021}. However, it has received little attention in online learning, where the standard input regimes are either (i.i.d.) stochastic or adversarial (see, e.g., the book by \citet{lattimore2020bandit}).
    The study of online learning in the random-order model is part of a broader research agenda that explores meaningful models beyond worst-case analysis \citep{Roughgarden20}. Examples of other ``intermediate'' input models for online learning can be found in \citet{ben1997online,SlivkinsU08,LykourisML18,sachs2022between,HaghtalabHSY22,MazzettoU23}.

    In random-order online learning, the $T$ loss vectors are generated arbitrarily by some adversary and then undergo a uniform shuffling before being presented to the learner. This input model is --- to some extent --- intermediate between the stochastic model, where losses are drawn i.i.d.~from a fixed but unknown distribution, and the adversarial one, where the adversary decides both the losses \emph{and} the order in which they are presented to the learner.  
    
    A classical probabilistic result by \citet{definetti29} 
 \citep[see also][]{definetti37,Hewitt55,DiaconisF80} on sequences of exchangeable random variables implies that sampling without replacement (i.e., the distribution over random-order loss sequences) is \emph{asymptotically} indistinguishable from i.i.d.~sampling from some fixed distribution over losses (see \Cref{app:de_finetti} for more details). 
However, in the standard \emph{finite-time} online learning framework, the non-stationarity of the random-order model can undermine learning algorithms that perform well in the stochastic setting. This observation raises several questions about the random-order model and its relationship with other input models.

    \subsection{Our Results} 
    \label{subsec:contribution}
        In this paper, we initiate the systematic study of the random-order input model in online learning.
        Since the standard adversarial model can naturally capture any realization of random-order sequences of losses, it is immediate to see that any algorithm for the adversarial model maintains its guarantees in random order. Conversely, any i.i.d.~sequence can be simulated by a random-order one by first sampling $T$ losses and then applying a uniform permutation. These two considerations show that the random-order model lies between the i.i.d.~and adversarial input model (see \Cref{app:hierarchy} for a formal proof): 
        \[
        \emph{i.i.d.} \preceq \emph{random-order} \preceq \emph{adversarial}.
        \]
        The hierarchy between the three models implies that investigating the random-order model is especially compelling in problems that exhibit a performance gap between the stochastic and adversarial models. 
        In such problems, a natural question is: \emph{What happens when we apply an algorithm designed for the stochastic setting to the random-order model?} One might expect such algorithms to perform reasonably well since the adversary can only select the support of the distribution. However, we show an algorithm that behaves nicely against stochastic inputs, but fails in a carefully designed random-order instance (\Cref{sec:separation}).
        We complement this negative result with a positive one. While we cannot directly apply an algorithm designed for the stochastic setting to the random-order model, we propose a general template, \simulation, to adapt learning algorithms from the i.i.d.~setting to the random-order model, without significantly affecting their regret guarantees (\Cref{sec:reduction}). This implies that the minimax regret regimes in the random-order model essentially \emph{collapse} to stochastic i.i.d.~ones. We demonstrate the applicability of our template across multiple problems that exhibit a stark separation between the adversarial and stochastic regimes.
        
        \paragraph{Prediction with Delayed Feedback} The interplay between online learning and delayed feedback has been studied extensively. While many delay models exist, the general finding is that the delay parameter $d$ influences regret bounds \emph{additively} in the i.i.d. model and \emph{multiplicatively} in the adversarial one \citep{DesautelsKB12,JoulaniGS13,MasoudianZS22}. In \Cref{thm:delay}, we show an algorithm with $\widetilde{O}(\sqrt{T} + d)$ regret rate in the random-order model, which qualitatively matches the stochastic result.

        \paragraph{Online Learning with Constraints} In online learning problems with time-varying constraints there is a strong separation between the adversarial case, where it is impossible to attain sublinear regret and constraints violations \citep{mannor2009online}, and the stochastic case, where $\widetilde O(\sqrt{T})$ regret is achievable \cite{yu2017online}. We present an algorithm (\Cref{cor:constraints}), based on \simulation, that guarantees regret of order $\widetilde O(\sqrt{T})$ and zero constraints violation, matching the results for stochastic environments.

        \paragraph{Bandits with Switching Costs} In the bandits with switching costs problem, the learning algorithm incurs an additional unitary loss every time it changes action. The adversarial minimax regret is $\Theta(T^{\nicefrac 23})$, while the stochastic one is $\Theta(\sqrt{T})$ \citep{Cesa-BianchiDS13,DekelDKP14}. We explore a simplified version of \simulation based on \successive, similar to the algorithm by \citet{AgrawalHT89}, and show that this approach recovers the $\sqrt{T}$ rate in the random-order model.
        
        \paragraph{Online Classification} 
            For (binary) classification, there is a well-known separation between statistical (offline) learning\footnote{Note, here statistical (offline) learning is equivalent to stochastic online learning, due to the standard online-to-offline reduction.} and online learning. While the learnability in the former setting is characterized by the Vapnik–Chervonenkis (VC) dimension, the latter is determined by the Littlestone dimension \citep{Ben-DavidPS09}. In general, the Littlestone dimension dominates the VC dimension, and there are simple examples (e.g., one-dimensional thresholds) of families with constant VC dimension and infinite Littlestone dimension. In \Cref{app:classification}, we prove that the VC dimension characterizes learnability also in the random-order model, thus providing a further separation between this model and the general adversarial one. Notably, results similar in spirit to ours have been recently obtained in \citet{HaghtalabHSY22,RamanT24}, where they prove that online binary classification is characterized by the VC dimension \emph{even} in a non-stationary setting, as long as the adversarial input is smooth \citep{HaghtalabHSY22} or the learner has access to good predictions \citep{RamanT24}.

    \paragraph{Discussion} On a theoretical level, our results have two main implications: (i) the minimax regret rates for random-order and the stochastic model are typically the same, and (ii) it is fairly easy to construct the desired random-order algorithm starting from a stochastic one.
    These results also have important practical implications: if the algorithm designer can determine the input order, then the same regret rate achievable in the stochastic setting can be obtained by simply shuffling the dataset.

    \subsection{Our Techniques}

        \paragraph{Separating Stochastic and Random Order Models via the Birthday Paradox} 
        We show that algorithms designed for the stochastic setting may fail in the random-order model, by leveraging the birthday paradox (from a pool of $n$ elements, a duplicate appears after about $\sqrt{n}$ samples). In a pool of $T$ loss vectors, a random-order instance corresponds to uniform sampling without replacement, whereas sampling with replacement mimics an i.i.d.~distribution. When the support is 
        finite, these processes are statistically distinct \citep{DiaconisF80}.
        
        \paragraph{\simulation} 
        In contrast to the previous negative result, we develop a template, \simulation, to adapt stochastic algorithms to the random-order model. It partitions the time horizon into $\log T$ geometrically increasing windows. At the start of each window, past data is used to \emph{simulate} an i.i.d.~distribution matching the expected values of the random-order instance, allowing the algorithm to \emph{train} in an i.i.d.~setting without incurring real loss. The action frequencies during training are then used on the actual instance within the current block.
        The analysis leverages that sampling without replacement, even if statistically distinct from i.i.d.~sampling, concentrates around the same mean.
    \subsection{Further Related Work}

    \paragraph{Random-order Online Convex Optimization} Random-order inputs in online convex optimization has been studied in \citet{GarberKL20} and  \citet{ShermanKM21}. Their main result is that such random-order instances allow for logarithmic regret \emph{even if the single losses are not strongly convex}, as long as the \emph{cumulative} loss is strongly convex. 
    
    \paragraph{Online Combinatorial Problems in Random Order} Random-order inputs have also been studied in the context of online combinatorial optimization in \citet{DongY23}. They consider approximate follow-the-leader algorithms and prove that when the offline optimization algorithm has low average sensitivity (i.e., the average impact of single points on the output is small, a notion introduced in \citet{VarmaY23}), then the offline-to-online reduction carries over also in the random-order model. 
    
    \paragraph{Online Learning with Long-Term Constraints} Online learning under time-varying constraints was first studied by \citet{mannor2009online}, who showed that achieving both sublinear regret and constraint violations is impossible when costs and rewards are linear but adversarially generated. Therefore, a significant gap exists between the stochastic setting, where $\widetilde O(\sqrt{T})$ regret and constraint violation are achievable \citep{yu2017online,badanidiyuru2013bandits,agrawal2014bandits}, and the adversarial setting, where guarantees typically take the form of no-$\alpha$-regret bounds, with $\alpha\in (0,1)$ representing the competitive ratio \citep{immorlica2022adversarial,castiglioni2022Online,castiglioni2022unifying,kesselheim2020online,bernasconi2024beyond}.
    Some studies achieve improved results under adversarially generated constraints, but at the cost of using a weaker baseline, such as static regret \citep{sun2017safety}.
    Another line of work \citep{balseiro2023best,fikioris2023approximately} bridges the gap between stochastic and adversarial settings by making assumptions about the environment's evolution over time.

\section{Model}
\label{sec:model}
    A learner repeatedly chooses one of $k$ actions over a time horizon $T$. At each time step $t$, the loss associated to action $a$ is denoted by $\ell_t(a) \in [0,1]$. The loss vectors in the random-order model are generated by an adversary and are presented to the learner in random order. The adversary generates a (multi-)set of $T$ loss vectors $h_t \in [0,1]^k$, then they are presented to the learner uniformly at random, i.e., a permutation $\pi$ is drawn and the loss at time $t$ is $h_{\pi(t)}=\ell_t$. We adopt the notation $[n]$ to denote the set $\{1,2,\dots,n\}$, and we denote by $\Delta_n$ the $n$-dimensional probability simplex. The model and results for the online classification are deferred to \Cref{app:classification} since they do not follow the same template used for the online learning models described next.

    \subsection{Prediction with Delayed Feedback} In Prediction with Delayed Feedback, the loss vectors are revealed to the learner after a known and fixed delay $d$. More precisely, for any $t$, the learner only observes the loss vector $\ell_t$ after $d$ time steps.
    We have the following definition of (expected) regret of algorithm $\cA$ choosing actions $a_t$ against the random-order input $\cS$ of losses $\ell_1, \dots, \ell_T$:
    \begin{equation}
    \label{def:regret}
        R_T(\cA,\cS) = \E{\sum_{t=1}^T \ell_t(a_t) - \min_{a \in [k]} \sum_{t=1}^T \ell_t(a)}.
    \end{equation}
    We denote with regret of the algorithm $\cA$ its worst-case regret, i.e., $\sup_{\cS}R_T(\cA,\cS).$ Note, the benchmark --- i.e., the performance of the best fixed action in hindsight --- is not influenced by the realization of the random permutation (in fact, we can equivalently write the benchmark as $\min_a \sum_{t=1}^Th_t(a)$). 
    There is a plethora of delayed feedback models studied in the literature; for the sake of simplicity, here we adopt arguably the simplest one with a fixed and known delay $d$, as in, e.g., \citet{MasoudianZS22}. With minimal effort, our results also apply to other models, e.g., random delay or unknown delay.

    \subsection{Online Learning with Constraints}
    
        We consider a setting similar to \citet{balseiro2023best} in which the learner has $m$ resource-consumption constraints. At each $t$, the learner plays $x_t\in \Delta_k$ and subsequently observes a reward vector $r_t\in[0,1]^k$ and $m$ cost vectors $(c_{1,t},\ldots,c_{m,t})\in[0,1]^{k\times m}$, one for each available resource.\footnote{In the literature on Online Learning with Constraints and Bandits with Knapsacks, it is customary to use rewards instead of losses, and we follow the same notation here.} The objective of the learner is to maximize the cumulative rewards $\sum_{t=1}^T r_t^\top x_t$ while guaranteeing that, for each $j\in[m]$, $\sum_{t=1}^Tc_{j,t}^\top x_t\le B$, where $B$ is the available budget for each resource.\footnote{Assuming that all budgets are the same comes without loss of generality. See discussion in \citet{immorlica2022adversarial}.} The per-iteration budget is defined as $\rho= \nicefrac BT$. We assume that there is a known action $\varnothing$ such that $ c_{j,t}(\varnothing)=0$ for all $j$ and $t$ (this can be thought as a ``skipping turn'' action, usually employed in the Bandits with Knapsack literature, see e.g.,  \citet{badanidiyuru2013bandits,immorlica2022adversarial}).

        In the random-order input model, an instance $\cS$ is represented by a multi-set of $T$ tuples $(r,c_1,\ldots,c_m)$ presented to the learner in uniform random order. We denote with $\bar r$ the average reward of the tuples in $\cS$, and, for each resource $j$, we let $\bar c_j$ be its average consumption. 
        The natural benchmark for the problem is given by the solution to the following LP (see, e.g., \citet{immorlica2022adversarial}): 
        \begin{align*}
            \OLP(\bar r, \bar c_1,\ldots,\bar c_m)= \left\{\begin{array}{l}\max_{x \in \Delta_k} \scalar{\bar r}{x}\quad\text{s.t.}\\[2mm]
            \scalar{\bar c_j}{x} \le \rho\quad\forall j\in[m]
            \end{array}\right..
        \end{align*}
        With $\OLP$, we denote the solution to the above LP instantiated with the average rewards and costs computed from $\cS$. Then, we define the regret in this setting as 
        \[
        R_T(\cA,\cS)= T\cdot \OLP - \sum_{t=1}^\tau r_t^\top x_t,
        \]
        where $\tau\in [T]$ is the stopping time of the algorithm (i.e., the time in which the first of the $m$ resources is fully depleted). Once $\tau$ is reached, the algorithm plays $\varnothing$ for the remainder of the time horizon. 
        Therefore, our algorithm enforces resource consumption constraints strictly, similar to what happens in bandits with knapsacks \citep{immorlica2022adversarial}. This is a stronger requirement than ensuring sublinear constraint violations, which is the typical approach in online learning with long-term constraints \citep{mannor2009online,yu2017online,castiglioni2022unifying}.

    \subsection{Bandits with Switching Costs}
        
        In the bandit with switching costs problem, at the end of each time $t$, the algorithm \emph{only} observes its loss $\ell_t(a_t)$ and suffers an additional unitary loss every time it changes action. For instance, this is the same model studied in \citet{Cesa-BianchiDS13,DekelDKP14}. The goal is to devise a learning strategy that minimizes the suffered loss or, equivalently, minimizes the regret with respect to the best-fixed action in hindsight. We then have the following definition of (expected regret) of algorithm $\cA$ choosing actions $a_t$\footnote{For simplicity, we define $a_{T+1} = a_T$.} against the random-order input $\cS$ of losses $\ell_1, \dots, \ell_T$:
    \[
        R_T(\cA,\cS) \! = \! \E{\sum_{t=1}^T \ell_t(a_t) \! + \! \ind{a_t \neq a_{t+1}} \! - \min_{a \in [k]} \! \sum_{t=1}^T \ell_t(a)}.
    \]
    We denote with regret of the algorithm $\cA$ its worst-case regret, i.e., $\sup_{\cS}R_T(\cA,\cS).$ Note, the benchmark is not influenced by the switching cost or by the random permutation (in fact, we can equivalently write the benchmark as $\min_{a\in[k]}\sum_{t=1}^T h_t(a)$).

\section{A Separation between Stochastic and Random Order Models}
\label{sec:separation}

    In this section, we analyze a learning algorithm that exhibits the no-regret property against any stochastic input but fails against a random-order one. We consider the basic Prediction with Experts problem, where the learner immediately observes the loss vector $\ell_t$ upon playing $a_t$. The regret in the stochastic setting is defined with respect to the expected performance of the best fixed arm, while in the random-order model it is defined as in \Cref{def:regret} (see also \Cref{app:hierarchy} for a comparison on the regret definitions).
    
    Consider the following simple $\dumb$ algorithm: it repeatedly plays the first action until a certain stopping time $\tau$ is realized, after which it starts to run some no-regret algorithm such as \follow \citep{hannan1957approximation} from scratch.\footnote{We only need the algorithm to be no-regret on stochastic i.i.d.~inputs.} The stopping time $\tau$ is defined as the first time step when one of two things happens: either $\ell_t(1) \notin \{\nicefrac i{T}, \text{ for }i = 1,2,\dots, T\}$, or $\ell_s(1) = \ell_t(1)$ for some $s \neq t.$ 
    Intuitively, up to time $\tau$, the algorithm's behavior while playing action 1 is essentially a test to determine whether the losses for this action are drawn uniformly at random from the set $\{\nicefrac{i}{T} \mid i = 1,2,\dots, T\}$.

    If the underlying distribution is i.i.d., then $\tau$ realizes pretty soon, while it never does under the random-order model.
    This result is based on folklore calculations related to the birthday paradox, but we report below a self-contained proof for completeness.
    \begin{restatable}{lemma}{birthday}
    \label{cl:tau}
    For any i.i.d.~input and $T$ sufficiently large, it holds that 
            \(
                \E{\tau} \le 2 \sqrt T.
            \)
    \end{restatable}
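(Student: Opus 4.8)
The plan is to reduce the computation to the classical birthday-paradox tail bound, being careful that the underlying i.i.d.\ distribution is arbitrary: it need not be uniform on the ``test set'' $S := \{\nicefrac{i}{T} : i\in[T]\}$, nor even supported on $S$. Write $p_i = \P{\ell_1(1)=\nicefrac{i}{T}}$ for $i\in[T]$ and $q = \sum_{i\in[T]} p_i \le 1$. Since the losses of action $1$ are i.i.d., the event $\{\tau>m\}$ is \emph{exactly} the event that $\ell_1(1),\dots,\ell_m(1)$ all lie in $S$ and are pairwise distinct; summing over ordered tuples of distinct indices, its probability equals $m!\,e_m(p_1,\dots,p_T)$, where $e_m$ denotes the $m$-th elementary symmetric polynomial.

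The first key step is to bound $e_m$ by its worst-case (uniform-on-$S$) value. By Maclaurin's inequality (equivalently, Newton's inequalities, or Schur-concavity of $e_m$ on the non-negative orthant), $e_m(p_1,\dots,p_T)\le \binom{T}{m}\bigl(\tfrac{q}{T}\bigr)^m\le \binom{T}{m}T^{-m}$. Substituting,
\[
\P{\tau>m}\;\le\;m!\binom{T}{m}T^{-m}\;=\;\prod_{j=0}^{m-1}\Bigl(1-\tfrac{j}{T}\Bigr)\;\le\;\exp\!\Bigl(-\tfrac{m(m-1)}{2T}\Bigr),
\]
where the last inequality uses $1-x\le e^{-x}$ — i.e.\ precisely the familiar birthday bound. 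Finally I sum the tail: $\E{\tau}=\sum_{m\ge 0}\P{\tau>m}\le \sum_{m\ge 0}\exp(-\tfrac{m(m-1)}{2T})$, and comparing with a Gaussian integral, $\sum_{m\ge 0}\exp(-\tfrac{(m-1)^2}{2T})\le 2+\int_0^\infty e^{-x^2/(2T)}\,dx = 2+\sqrt{\pi T/2}$. Since $\sqrt{\pi/2}<2$, the additive constant is absorbed for $T$ large enough, giving $\E{\tau}\le 2\sqrt{T}$.

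I do not expect a serious obstacle; the only point requiring care is the reduction to the uniform distribution. A direct second-moment / union bound over collision pairs does \emph{not} suffice, since $\sum_i p_i^2$ can be much larger than $q^2/T$ when the distribution is concentrated (even though in that case $\tau$ is tiny), so I would invoke Maclaurin's inequality (or argue Schur-concavity of $e_m$ and hence maximality at the balanced point $p_i=q/T$) rather than a crude union bound. Handling the possibility that the distribution places mass outside $S$ is free in this approach, as it only forces $q<1$ and thereby strengthens the bound.
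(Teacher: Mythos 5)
Your proof is correct, and it reaches the same quantitative conclusion as the paper ($\E{\tau} \lesssim 1.26\sqrt{T}$, comfortably below $2\sqrt{T}$ for large $T$), but by a genuinely different route in two places. Both arguments start from $\E{\tau}=\sum_{m\ge 0}\P{\tau>m}$ and the falling-factorial survival probability, but the paper simply asserts that the uniform distribution on $\{\nicefrac{i}{T}\}$ is the worst case and then computes the uniform case \emph{exactly}: it rewrites $\sum_{m}\prod_{j<m}(1-\nicefrac{j}{T})$ via an integral representation and applies dominated convergence to get $\E{\tau}/\sqrt{T}\to\sqrt{\pi/2}$. You instead (i) make the worst-case reduction rigorous by writing $\P{\tau>m}=m!\,e_m(p_1,\dots,p_T)$ and invoking Maclaurin's inequality / Schur-concavity of $e_m$, which correctly handles arbitrary atom weights and mass outside the test set (and you rightly note a naive union bound over collision pairs would not do this), and (ii) replace the exact asymptotics by the elementary birthday bound $\prod_{j<m}(1-\nicefrac{j}{T})\le e^{-m(m-1)/2T}$ plus a Gaussian-integral comparison, yielding $\E{\tau}\le 2+\sqrt{\pi T/2}$. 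The paper's route buys the sharp limiting constant $\sqrt{\pi/2}$; yours is more elementary (no dominated convergence) and more careful on the reduction step that the paper dismisses as ``immediate.'' The only cosmetic nit is that the termwise bound $e^{-m(m-1)/2T}\le e^{-(m-1)^2/2T}$ fails at $m=0$, but your additive constant $2$ (covering the $m=0,1$ terms separately) already absorbs this, so nothing is broken.
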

    \begin{proof}
            It is immediate to observe that $\E{\tau}$ is maximized when the distribution of the losses on arm $1$ is the uniform distribution on the support $\{\nicefrac i{T}, \text{ for }i = 1,2,\dots, T\}$, so we analyze that case. 
            As a first step in the analysis, we find an integral formula for the expected value of $\tau$ (when $\ell_t(1)$ is drawn i.i.d.~from $\{\nicefrac i{T}, \text{ for }i = 1,2,\dots, T\}$), and then we study its asymptotic behaviour.
            We have the following chain of inequalities: 
            \begin{align}
            \nonumber
                \E{\tau} &= \sum_{t = 0}
^{\infty}         \P{\tau > t}   \\
\nonumber
&= \sum_{t=0}^{T} \frac{T \cdots  (T-t)}{T \cdots T} \tag{Avoid collision up to time $t$}
\\
\nonumber
&= \sum_{t=0}^{T} \binom{T}{t}\frac{t!} {T^t}\\
\nonumber
&= \int_{0}^{+\infty} \sum_{t=0}^{T-1} \binom{T}{t}\left(\frac{x}{T}\right)^t e^{-x} dx \tag{$\int x^j e^{-x} dx = j!$}\\
\nonumber
&= \int_{0}^{+\infty} \left(1+\frac{x}{T}\right)^T e^{-x} dx \\
&= \sqrt{T}\int_{0}^{+\infty} \left(1+\frac{y}{\sqrt{T}}\right)^T e^{-y \sqrt{T}} dy \label{eq:integral_formula}
\end{align}
    We then focus on the integral term, at the right-most side of \Cref{eq:integral_formula}, and study the convergence of $\tfrac{\E{\tau}}{\sqrt{T}}$. Denote the integrand with $f_T(y)$, and consider its limit for large $T$: 
    \begin{align*}
        \lim_{T \to+\infty}f_T(y) &= \lim_{T \to +\infty} \exp \left[T \log\left(1+\tfrac{y}{\sqrt T}\right) - y \sqrt T\right] = e^{-\tfrac{1}{2} y^2}.
    \end{align*}
        
    Moreover, $f_T(y)$ is monotonically decreasing in $T$ and $f_1(y)$ is integrable; therefore, we can apply the dominated convergence theorem and argue that 
    \begin{align*}
        \lim_{T \to +\infty} \frac{\E{\tau}}{\sqrt{T}} &= \int_{0}^{+\infty} \lim_{T \to +\infty} f_T(y) dy = \int_{0}^{+\infty}e^{-\tfrac{1}{2} y^2} dy = \tfrac{1}2\sqrt{2\pi},
    \end{align*}
    which concludes the proof.
    \end{proof}
    
    Consider now the performance of $\dumb$ against any i.i.d.~input: it suffers at most constant instantaneous regret up to the stopping time $\tau$ (for an overall expected regret of $4 \sqrt{T}$, as proved in \Cref{cl:tau}) and then run \follow, for an overall regret rate of $O(\sqrt{T\log T})$.

    It turns out that there exists a random-order instance that fools $\dumb$. The instance has only two actions: action $1$, whose losses are given by a permutation of the set $\{\nicefrac i{T}, \text{ for }i = 1,2,\dots, T\}$, and action $2$, which always yields $0$ loss. $\dumb$ on this instance always plays the first action, accumulating regret at each time step (as $\tau$ will never realize by definition), for a cumulative regret of $ \Omega(T)$. All in all, we have proved the following result. 
    \begin{theorem}
    \label{thm:no_black_box}
        Consider the Prediction with Experts problem. \dumb exhibits $O(\sqrt{T\log T})$ regret against any stochastic i.i.d.~instance, but there exists a random-order instance against which it suffers $\Omega(T)$ regret. 
    \end{theorem}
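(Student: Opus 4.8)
The statement packages two independent claims, which I would establish one at a time; throughout the first part let $\mu_a$ denote the mean loss of arm $a$ under the i.i.d.~instance and $\mu^\star=\min_a\mu_a$. \textbf{Stochastic upper bound.} Split the horizon at the random stopping time $\tau$. During the first phase, $t\le\tau$, the algorithm plays action $1$; since losses lie in $[0,1]$ its per-step regret against the best fixed arm is at most $1$, so this phase contributes at most $\E{\tau}$ to the expected regret, which is $O(\sqrt T)$ by \Cref{cl:tau}. During the second phase, $t>\tau$, the algorithm runs \follow from scratch. The point to exploit is that $\tau$ is a stopping time with respect to the filtration generated by the observed loss vectors, so conditionally on $\tau$ (and the first-phase history) the losses $\ell_{\tau+1},\dots,\ell_T$ are still i.i.d.~from the same fixed distribution; in particular the conditional expected cumulative loss of any arm on this suffix equals $(T-\tau)\mu_a$, so the i.i.d.~no-regret guarantee of \follow bounds the second-phase regret against the best arm by $\widetilde{O}(\sqrt{T-\tau})=\widetilde{O}(\sqrt T)$. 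Adding the two phase bounds and noting that the comparator $T\mu^\star$ splits exactly as $\E{\tau}\mu^\star+\E{T-\tau}\mu^\star$, matching the two pieces, yields $R_T\le\E{\tau}+\widetilde{O}(\sqrt T)=O(\sqrt{T\log T})$.

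\textbf{Random-order lower bound.} Consider the two-action instance whose loss multiset assigns to action $1$ the values $\{\nicefrac{i}{T}:i\in[T]\}$ and to action $2$ the constant $0$, presented in uniform random order. For every realization of the permutation all losses of action $1$ lie in $\{\nicefrac{i}{T}:i\in[T]\}$ and are pairwise distinct, so neither of the two events defining $\tau$ ever occurs and $\tau>T$ holds with probability one. Thus \dumb plays action $1$ throughout and incurs cumulative loss $\sum_{i=1}^T\nicefrac{i}{T}=\nicefrac{(T+1)}{2}$, whereas the benchmark $\min_a\sum_{t=1}^T h_t(a)$ equals $0$, attained by action $2$. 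Hence $R_T(\dumb,\cS)=\nicefrac{(T+1)}{2}=\Omega(T)$, and this value does not depend on the random permutation.

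\textbf{Main difficulty.} There is no substantial obstacle: the lower bound is a direct unwinding of the definition of $\tau$, and the upper bound is a routine two-phase decomposition. The only slightly delicate point is the composition step in the stochastic analysis --- formally justifying that the suffix after the stopping time $\tau$ is again i.i.d.~from the same distribution, so that \follow's i.i.d.~guarantee applies verbatim and the suffix comparator coincides in expectation with the matching part of the global comparator $T\mu^\star$. If one prefers the explicit constant from the informal discussion rather than a loose $\widetilde{O}(\sqrt T)$, it suffices to carry the constant of \Cref{cl:tau} through the argument.
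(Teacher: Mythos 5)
Your proposal is correct and follows essentially the same route as the paper: bound the pre-$\tau$ phase by $\E{\tau}=O(\sqrt T)$ via \Cref{cl:tau} and invoke \follow's i.i.d.\ guarantee on the suffix for the stochastic bound, then use the identical two-action instance (a permutation of $\{\nicefrac iT\}$ versus the all-zero arm) on which $\tau$ never fires, giving cumulative loss $\nicefrac{(T+1)}{2}$ against a zero benchmark. The only difference is that you spell out the stopping-time/conditional-i.i.d.\ justification for restarting \follow, a detail the paper leaves implicit.
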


\section[]{A General Template: \simulation}
\label{sec:reduction}
    In this Section, we present a general reduction template: \simulation. We then show how this general idea can be applied to prediction with delayed feedback, in online learning with constraints, and bandits with switching costs.

    \begin{algorithm}[h!]
        \begin{algorithmic}
            \STATE \textbf{Input:} stochastic algorithm $\cA$
            \FOR{$i=0,1,2, \dots, \log T $}
                \STATE \textbf{(i) iid-ify past data} 
                \STATE \quad Construct a distribution on past data  $\cD_i$
                \STATE \textbf{(ii) Train $\cA$ on a simulated past}
                \STATE \quad Run algorithm $\cA$ over $2^i$ i.i.d.~samples of $\cD_i$ 
                \STATE \textbf{(iii) Test over new data}
                \STATE \quad Let $n_i(a)$ be the times that $\cA$ plays action $a$ on $\cD_i$
                \STATE \quad \textbf{for} $t=2^i+1, 2^{i}+2, \dots, 2^{i+1}$ \textbf{do} 
                \STATE \quad \quad Play action $a_t \sim (\nicefrac{n_i(1)}{2^i}, \dots, \nicefrac{n_i(k)}{2^i})$
            \ENDFOR
            \caption*{The \simulation Template}
        \end{algorithmic}
    \end{algorithm}
    \paragraph{\simulation} The core idea of our approach (see the pseudocode for details) is to divide the time horizon into blocks of geometrically increasing length, performing three key operations within each block. At the beginning of the $i$-th block, which has a length of roughly $2^i$, we look at the feedback received during the previous time steps and construct a distribution $\cD_i$ (for instance, by uniform sampling on the previous $2^i$ samples). 
    Next, we ``train'' our stochastic algorithm $\cA$ on $\cD_i$ (without incurring real losses). Finally, we use $\cA$'s observed performance over $\cD_i$ to play against the actual losses in the time block.
    We observe that \simulation is not a black-box reduction but a general ``recipe'' for adapting stochastic algorithms to the random-order setting.

    To build a high-level intuition on the effectiveness of \simulation, we can look at its performance when the underlying stochastic algorithm is \dumb. By fictitiously letting play \simulation against distribution $\cD_i$, we trigger the stopping time $\tau$ so that \dumb starts playing as \follow on the testing part, thus incurring sublinear regret. 

    \subsection{Predictions with Delayed Feedback}
    \label{subsec:delay}

        In the prediction with delayed feedback model, the learner has access to the loss vector after $d$ time steps. To specialize the \simulation template to this model, we need to account for the delayed feedback by adding a buffer of $d$ time steps between the blocks,
        allowing enough time to receive the 
        feedback corresponding to the previous block (we refer to the pseudocode for further details). 

        At the beginning of the generic time block $i$, the distribution $\cD_i$ that is used to simulate past data is simply the uniform distribution over all the loss vectors observed in the blocks before the $i^{th}$ which are contained in the multiset $O$ (the losses observed in $i\cdot d$ time steps corresponding to the previous buffers are discarded). 
        To simplify the notation, we denote with $T_i$ the time steps in the $i^{th}$ time block, i.e., $T_i = \{t_i, \dots, t_i+2^i-1\}$, where $t_i=2^i+i \cdot d$.
        
        \begin{algorithm}[t!]
            \begin{algorithmic}
                \STATE \textbf{Input:} stochastic algorithm $\cA$
                \STATE \textbf{Environment}: $k$ actions, time horizon $T$
                \STATE \textbf{Model}: full feedback model with delay $d$. 
                \STATE Play any action at time $1$
                \STATE Initialize multiset $O = \{\ell_1\}$
                \FOR{$i=0,1,2, \dots, \lceil \log T \rceil$}
                    \STATE Let $\cD_i$ be the uniform distribution on $O$ \hfill \COMMENT{iid-ify past}
                    \STATE Run $\cA$ over $2^i$ i.i.d. samples of $\cD_i$ \hfill \COMMENT{Training}
                    \STATE Let $n_i(a)$ be the number of times that $\cA$ plays $a$
                    \STATE $t_i \gets i \cdot d + 2^i$
                    \hfill \COMMENT{Beginning of $i^{th}$ block}\FOR{$t=t_i, t_i+2, \dots, t_i + 2^i-1$}
                        \STATE Play action $a_t \sim (\nicefrac{n_i(1)}{2^i}, \dots, \nicefrac{n_i(k)}{2^i})$
                        \hfill \COMMENT{Test}
                        \STATE Add loss $\ell_t$ to $O$ when revealed
                        \STATE \textbf{if} $t = T$, \textbf{then} terminate
                    \ENDFOR
                    \STATE Play arbitrarily for the next $d$ time steps
                \ENDFOR
                \caption*{\simulation for delayed feedback}
            \end{algorithmic}
            \end{algorithm}

            Fix any sequence of losses and any block $i$. At the beginning of $T_i$, the multiset $O$ contains all the losses observed in the previous time blocks, which are $2^i$. Therefore, sampling according to $\cD_i$ yields an unbiased estimator of the average loss in the earlier blocks. This is formalized in the following Lemma. 
            
            \begin{lemma}
                \label{cl:delay_estimation}
                Fix any sequence of losses and time block $i$. Then we have: 
                    \[
                        \Esub{\ell \sim \cD_i}{\ell(a)} = \frac{1}{2^i} \sum_{i'<i}\sum_{t\in T_{i'}} \ell_t(a) \quad \forall a\in [k].
                    \]
            \end{lemma}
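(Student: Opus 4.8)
The plan is to simply unwind the definition of $\cD_i$, after which the identity becomes an exercise in bookkeeping. By construction $\cD_i$ is the uniform distribution over the multiset $O$ of previously observed losses, so linearity of expectation gives, for every action $a\in[k]$,
\[
\Esub{\ell\sim\cD_i}{\ell(a)} \;=\; \frac{1}{|O|}\sum_{\ell\in O}\ell(a).
\]
Note this is an \emph{exact} identity, holding conditionally on the realized loss sequence (the permutation has already been fixed); the only randomness is the draw $\ell\sim\cD_i$. So it remains to establish the two structural facts implicit in the statement: (i) as a multiset, $O = \biguplus_{i'<i}\{\ell_t : t\in T_{i'}\}$ at the moment block $i$ begins, and (ii) $|O| = 2^i$.

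For (i) I would trace the two places where $O$ is modified in the pseudocode: it is initialized to $\{\ell_1\}$, and the only subsequent update is the line ``Add loss $\ell_t$ to $O$ when revealed'', which sits inside the test loop ranging over $t\in T_{i'}$. In particular, none of the loss vectors encountered during the $d$-step buffers between consecutive blocks is ever inserted into $O$ --- this is precisely the sense in which ``the losses observed in the previous buffers are discarded''. Combining the initialization with these insertions (and identifying the initial step with the degenerate first block $T_0=\{1\}$, since $t_0 = 0\cdot d + 2^0 = 1$) shows that just before block $i$ the multiset $O$ is exactly the union of the losses revealed during $T_0,\dots,T_{i-1}$.

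For (ii) it then suffices to count cardinalities: $|T_{i'}| = 2^{i'}$, so a short induction on $i$ (using $\sum_{i'=0}^{i-1}2^{i'} = 2^i-1$ together with the element contributed by the initialization) gives $|O| = 2^i$. Substituting $|O| = 2^i$ and the multiset identity of (i) into the displayed equation yields exactly $\Esub{\ell\sim\cD_i}{\ell(a)} = \frac{1}{2^i}\sum_{i'<i}\sum_{t\in T_{i'}}\ell_t(a)$. The only place any care is needed is this cardinality accounting --- making sure the buffer steps and the off-by-one from the initialization are treated consistently --- but there is no analytic content here; the lemma is in essence a restatement of how $\cD_i$ is defined.
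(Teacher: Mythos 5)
Your argument is correct and is essentially the paper's own (implicit) justification: the paper proves the lemma simply by observing that, at the start of block $i$, the multiset $O$ consists exactly of the $2^i$ losses revealed during the earlier blocks, so uniform sampling from $O$ is by definition the stated empirical average. The only wrinkle is the off-by-one bookkeeping around the initialization at time $1$ versus the degenerate block $T_0$, but that ambiguity is already present in the paper's pseudocode and does not affect the content of the lemma.
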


        In the analysis of \simulation for prediction with delayed feedback, we employ a generic stochastic routine $\cA$ which is guaranteed an i.i.d. regret bound of $R^{\text{iid}}_{T'}(\cA)$ against any stochastic input of length $T'$.
            
        \begin{theorem}
            Consider the problem of online prediction with delayed feedback in the random-order model, and let $d$ be the delay parameter. Running \simulation for delayed feedback with stochastic routine $\cA$ ($\textsc{Sim-}\cA$) yields the following regret bound:
            \[
            \textstyle
                R_T(\textsc{Sim-}\cA) \le 5 \sqrt{T\log T}+d \log T + \sum_{i=0}^{\log T} R^{\textnormal{iid}}_{2^i}(\cA).
            \]
        \end{theorem}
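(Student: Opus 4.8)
The plan is to analyze \simulation block by block, charging the real loss suffered in each testing block against the population average and against the per-window training regret of $\cA$. First I would split the horizon $[T]$ into the testing blocks $T_0,T_1,\dots$, the $d$-step buffers separating consecutive blocks, and the single initialization step. Since losses lie in $[0,1]$ and there are at most $\lceil\log T\rceil$ buffers, the loss (hence regret) accumulated on the buffers and the initial step is at most $d\lceil\log T\rceil+1$. Writing $\mu(a)=\tfrac1T\sum_{t=1}^T h_t(a)$ for the population-average loss of action $a$---so that the benchmark equals $\min_a\sum_t h_t(a)=T\,\mu(a^\star)$ with $a^\star\in\arg\min_a\mu(a)$---it then suffices to show, for every block $i$,
\[
\E{\sum_{t\in T_i}\ell_t(a_t)}\ \le\ |T_i|\,\mu(a^\star)\ +\ R^{\mathrm{iid}}_{2^i}(\cA)\ +\ \mathrm{err}_i, \qquad \sum_i\mathrm{err}_i=O\!\left(\sqrt{T\log T}\right).
\]
Indeed the testing blocks partition a subset of $[T]$, so $\sum_i|T_i|\le T$ and the terms $|T_i|\mu(a^\star)$ sum to at most $T\mu(a^\star)$, cancelling the benchmark; tracking constants then gives the stated $5\sqrt{T\log T}$.

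The core of the argument is a conditioning identity for a single block. Fix block $i$ and let $\cF_i$ collect all losses revealed before the test phase of block $i$ together with all internal randomness that $\cA$ uses while training in block $i$; then the play distribution $p_i=(n_i(1)/2^i,\dots,n_i(k)/2^i)$ is $\cF_i$-measurable. Let $O$ be the multiset of the $\Theta(2^i)$ losses seen in the earlier blocks (so $\cD_i=\mathrm{Unif}(O)$) and let $R_i$ be the multiset of loss vectors not yet revealed at the start of block $i$, so the number of already-consumed losses is $T-|R_i|=\Theta(2^i+id)$. Because the input order is a uniform random permutation, conditionally on $\cF_i$ each $\ell_t$ with $t\in T_i$ is marginally uniform over $R_i$, whereas $a_t\sim p_i$ is drawn with fresh coins independent of the permutation; hence $\E{\ell_t(a_t)\mid\cF_i}=\scalar{p_i}{\bar\ell_{R_i}}$ with $\bar\ell_{R_i}(a):=\tfrac1{|R_i|}\sum_{h\in R_i}h(a)$, and summing over $t\in T_i$ and taking expectations, $\E{\sum_{t\in T_i}\ell_t(a_t)}=|T_i|\,\E{\scalar{p_i}{\bar\ell_{R_i}}}$.

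Next I would pass from $\bar\ell_{R_i}$ to $\mu$ through the empirical mean $\bar\ell_O$, writing $\scalar{p_i}{\bar\ell_{R_i}}\le\scalar{p_i}{\bar\ell_O}+\max_a|\bar\ell_{R_i}(a)-\bar\ell_O(a)|$. For the first summand, during training $\cA$ is fed a genuine i.i.d.\ stream of mean vector $\bar\ell_O$ (\Cref{cl:delay_estimation}), so its stochastic regret guarantee, applied conditionally on $O$, gives $\E{\scalar{p_i}{\bar\ell_O}\mid O}\le\min_a\bar\ell_O(a)+R^{\mathrm{iid}}_{2^i}(\cA)/2^i\le\bar\ell_O(a^\star)+R^{\mathrm{iid}}_{2^i}(\cA)/2^i$; since a uniform permutation makes $O$ a uniform subsample of the population, $\E{\bar\ell_O(a^\star)}=\mu(a^\star)$, so $\E{\scalar{p_i}{\bar\ell_O}}\le\mu(a^\star)+R^{\mathrm{iid}}_{2^i}(\cA)/2^i$. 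For the second summand, both $\bar\ell_O$ and $\bar\ell_{R_i}$ are sample means of sampling-without-replacement subsamples of a $[0,1]$-valued population, so by the Hoeffding-type inequality for sampling without replacement together with a union bound over the $k$ actions and the $O(\log T)$ blocks, on an event of probability at least $1-1/T$ every block satisfies $\max_a|\bar\ell_O(a)-\mu(a)|=O(\sqrt{\log T/2^i})$ and $\max_a|\bar\ell_{R_i}(a)-\mu(a)|=O(\sqrt{(T-|R_i|)\log T}\,/\,|R_i|)$ (the deviation of $\bar\ell_{R_i}$ being driven by that of the average over the $T-|R_i|$ already-consumed losses), while on the complementary event the regret is at most $T\cdot\tfrac1T=1$. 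Multiplying by $|T_i|\le2^i$ and summing the geometric series over $i\le\lceil\log T\rceil$---handling separately the degenerate regime where the buffers eat a constant fraction of $[T]$, in which $d\log T=\Omega(T)$ and the claim is vacuous---yields $\sum_i\mathrm{err}_i=O(\sqrt{T\log T})$, and combining with the decomposition finishes the proof.

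I expect the main obstacle to be the conditioning step: the play distribution $p_i$ is built from $O$, which is \emph{correlated} with the fresh losses $R_i$ met during testing (both come from the same random permutation), so one cannot simply declare that ``$\cA$ was trained on the test distribution''. The remedy---isolating $p_i$ via the $\ell_\infty$ slack $\max_a|\bar\ell_{R_i}(a)-\bar\ell_O(a)|$ and then bounding that slack and the empirical-to-population gap by concentration for sampling \emph{without} replacement, all while keeping the per-block errors summing to $O(\sqrt{T\log T})$ rather than $O(\sqrt T\log T)$---is where the real work sits. A secondary, purely bookkeeping, issue is that the block lengths overshoot ($\sum_i 2^i>T$) and the last block is truncated; this is handled by always charging the benchmark with the true length $|T_i|$ while invoking the training regret $R^{\mathrm{iid}}_{2^i}(\cA)$ with the full training length $2^i\ge|T_i|$.
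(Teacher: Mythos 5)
Correct, and essentially the paper's approach: you use the same architecture of geometrically increasing blocks separated by $d$-step buffers (contributing $d\log T$), training $\cA$ on the uniform distribution over past losses and transferring its i.i.d.\ pseudo-regret via the fixed play frequencies, and then Hoeffding's inequality for sampling without replacement with union bounds over actions and blocks to relate past and future empirical means, absorbing the low-probability bad event trivially. Your bookkeeping differs only mildly — you compare against the population mean $\mu(a^\star)$ using exact unbiasedness of $\bar\ell_O(a^\star)$ and a conditioning identity over the unrevealed remainder $R_i$, whereas the paper compares the per-block empirical gaps $\Delta_i(a),\Delta^p_i(a)$ to the global gap $\Delta(a)$ under a clean event — and, like the paper itself (whose geometric-sum step is loose), you assert rather than verify the explicit constant $5$, so both arguments really establish the bound at the stated rate.
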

        \begin{proof}

            Fix any input sequence $\cS = \{h_1, \dots, h_T\}$, any realization of the permutation $\{\ell_1, \dots, \ell_T\}$, and denote with $a^\star$ the corresponding best action ($a^\star$ only depends on $\cS$, not on the specific permutation). 
            For any block $i$, the stochastic algorithm $\cA$ is trained over $2^i$ i.i.d.~samples from $\cD_i$, and the empirical frequencies $\nicefrac{n_i(a)}{2^i}$ are then used to play in $T_i$.
            %
            For any action $i$, denote with $\overline{\Delta}_i(a)$ the gap between the loss of action $a$ and that of action $a^\star$ according to $\cD_i$:
            \[
                \overline{\Delta}_i(a) = \Esub{\ell \sim \cD_i}{\ell(a) - \ell(a^\star)}.
            \]
            From the guarantees on the regret bound of $\cA$, we get
            \begin{equation}
            \label{eq:follow}
                \sum_{a} \E{n_i(a)} \overline{\Delta}_i(a) \le R^{\text{iid}}_{2^i}(\cA),
            \end{equation}
            where the expectation is with respect to the randomness in the i.i.d.~sampling from $\cD_i.$ 

            The gaps $\overline{\Delta}_i(a)$ induced by the distribution $\cD_i$ are related to the empirical gaps $\Delta^p_i(a)$ experienced in the previous time blocks by \Cref{cl:delay_estimation}. In particular, for any $\Delta^p_i(a)$ (the superscript stands for ``past'') we have:
            \begin{align}
                \Delta^p_i(a) &= \frac{1}{2^i}\sum_{i'<i} \sum_{t \in T_{i'}} (\ell_t(a) - \ell_t(a^\star)) \tag{By definition}  \\
                &=\Esub{\ell\sim\cD_i}{\ell(a) - \ell(a^\star)} \tag{By \Cref{cl:delay_estimation}}\\
                &= \overline{\Delta}_i
                (a).\tag{By definition of $\overline{\Delta}_i
                (a)$}
            \end{align}
            Plugging in the above equality into the stochastic regret guarantees as in \Cref{eq:follow}, we get:
            \begin{equation}
            \label{eq:follow_unbiased}
                \sum_{a} \E{n_i(a)} {\Delta}^p_i(a) \le R^{\text{iid}}_{2^i}(\cA).
            \end{equation}
            While the above inequality characterizes the performance of \simulation if run on ``past'' losses, we need to relate it to the ``future'' or actual ones, i.e., the ones appearing in time block $i$. We denote the gaps in the current time block with $\Delta_i(a)$.
            Let's now look at the actual regret suffered by the algorithm during $T_i$:
            \begin{align}
            \notag
                \sum_{t \in T_i} \left(\E{\ell_t(a_t)} - \ell_t(a^{\star})\right)
                &= \frac{1}{2^i}\sum_a\E{n_i(a)}\sum_{t \in T_i} \left(\ell_t(a) - \ell_t(a^{\star})\right) \tag{By design}\\ 
                &=\sum_a\E{n_i(a)} \Delta_i(a) \tag{By definition of $\Delta_i(a)$}\\
                \notag
                &=\sum_a \E{n_i(a)} \Delta^p_i(a) + \sum_a \E{n_i(a)} (\Delta_i(a) - \Delta^p_i(a))\\
                \label{eq:final_delay} 
                &\le R^{\text{iid}}_{2^i}(\cA) + 2^i \cdot \max_a  |\Delta_i(a) - \Delta^p_i(a)|,
            \end{align}
            where the last inequality follows because of \Cref{eq:follow_unbiased} and the fact that $\sum_{a\in[k]} n_i(a) = 2^i$. So far, our argument works for any input model, as the sequence of losses was arbitrary; now it's time to exploit the fact that random-order sequences exhibit concentration (see \Cref{app:hoeffding}). In particular, we want to argue that, for any block $i$ and action $a$, both the past and future empirical gaps $\Delta_i^p(a)$ and $\Delta_i(a)$ are close to the \emph{actual} gap $\Delta(a)$, where 
            \[
                \Delta(a) = \frac{1}{T}\sum_{i \in [T]} (h_i(a) - h_i(a^\star)) = \frac{1}{T}\sum_{t \in [T]} (\ell_t(a) - \ell_t(a^\star)).
            \]
            We introduce the precision $\epsilon_i = 2  \sqrt{\nicefrac{\log T}{2^i}}$ and the corresponding clean event $\cE_i$:
            \[
                \cE_i = \{\max\{|\Delta_i(a)-\Delta(a)|,|\Delta^p_i(a)-\Delta(a)| \le \epsilon_i, \forall a\}.
            \]
            We can show that such an event is realized with high probability.
            \begin{restatable}{claim}{highprobuno}
            \label{cl:clean_event_delay}
                For any block $i$, the corresponding clean event is realized with probability at least $1-\nicefrac{1}{T}$.
            \end{restatable}
             \begin{proof}
            $\Delta_i(a)$ and $\Delta_i^p(a)$ have the same distribution, so we focus only on the former. By Hoeffding inequality for sampling without replacement (see \Cref{thm:hoeffding}), we have that, for each $a\in[k]$:
            \[
            |\Delta_i(a)-\Delta(a)|\le \sqrt{\frac{\log(2T)}{2^{i-1}}} \le \epsilon_i
            \]
            with probability at least $1-\frac{1}{2 T^2}$. The statement follows by union bounding over all $2k$ events (and $k \le T$).
            \end{proof}

            Then, conditioning on the clean event $\cE_i$, we can improve the bound in \Cref{eq:final_delay}:
            \begin{align*}
                \sum_{t \in T_i} \left(\E{\ell_t(a_t)} - \ell_t(a^{\star})\right) - R^{\text{iid}}_{2^i}(\cA)&\le  2^i \cdot \max_a  |\Delta_i(a) - \Delta^p_i(a)|\\
                &\le 2^{i+1} \max_a \{ |\Delta(a) - \Delta^p_i(a)|,|\Delta(a) - \Delta_i(a)|\}\\
                &\le 4 \sqrt{{\log T}\cdot {2^i}}.
            \end{align*}
            By removing the conditioning w.r.t.~the clean event (and using the probability bound as in \Cref{cl:clean_event_delay} and the fact that the regret suffered under the bad event is at most $2^i$), we get: 
            \begin{align*}
                \sum_{t \in T_i} \left(\E{\ell_t(a_t) - \ell_t(a^{\star})}\right) \le R^{\text{iid}}_{2^i}(\cA) + 5 \sqrt{{\log T}\cdot {2^i}}.
            \end{align*}
            We are ready to bound the overall regret: for any time block, we have the above inequality, while for the $\log T$ buffers, we suffer an overall regret of at most $d \log T$. All in all, we have that the regret $R_T$ of \simulation for delayed feedback with routine $\cA$ is at most: 
            \begin{align*}
                R_T(\textsc{Sim-}\cA) &\le \sum_{i=0}^{\log T} \left(R^{\text{iid}}_{2^i}(\cA) + 5 \sqrt{{\log T}\cdot {2^i}}+d\right)\\
                &\le  5 \sqrt{T\log T}+d \log T + \sum_{i=0}^{\log T} R^{\text{iid}}_{2^i}(\cA).
            \end{align*}
            This concludes the proof. 
        \end{proof}

        In particular, if we instantiate algorithm $\cA$ to be \follow,  we get the following guarantees.
        \begin{corollary}
        \label{thm:delay}
            There exists an algorithm for online prediction with delayed feedback in the random-order model that enjoys an $O(\sqrt{T \log T} + d \log T)$ regret bound.
        \end{corollary}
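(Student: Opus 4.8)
The plan is to apply the preceding theorem with the concrete stochastic routine $\cA = \follow$, which reduces the corollary to two elementary facts: a bound on the i.i.d.\ regret $R^{\mathrm{iid}}_{T'}(\follow)$, and the summability of these bounds over the $\log T$ geometric blocks. Indeed, the theorem already gives
\[
    R_T(\textsc{Sim-}\follow) \;\le\; 5\sqrt{T\log T} + d\log T + \sum_{i=0}^{\log T} R^{\mathrm{iid}}_{2^i}(\follow),
\]
so it only remains to show that the last sum is $O(\sqrt{T\log T})$.

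For the i.i.d.\ regret of \follow\ I would invoke the standard stochastic analysis. Let $\Delta_a \ge 0$ denote the suboptimality gap of action $a$ under the fixed loss distribution underlying a length-$T'$ i.i.d.\ instance. The expected instantaneous regret at round $t$ is $\sum_a \P{a_t=a}\,\Delta_a$, and for the empirical leader $a_t$ to equal a suboptimal $a$ its empirical mean over the first $t-1$ samples must beat the optimum's; a two-sided Hoeffding bound yields $\P{a_t=a} \le 2\exp(-(t-1)\Delta_a^2/2)$. Summing the geometric-in-$t$ tails gives the gap-dependent bound $O\!\big(\sum_a \Delta_a^{-1}\big)$, and separating actions with $\Delta_a$ above a threshold $\epsilon$ (total cost $O(k/\epsilon)$) from those below it (cost at most $\epsilon$ per round, hence at most $\epsilon T'$ over the horizon) and optimizing $\epsilon$ gives $R^{\mathrm{iid}}_{T'}(\follow) = O(\sqrt{T'\log k}) = O(\sqrt{T'\log T'})$, using $k \le T$ (a coarser argument gives $O(\sqrt{kT'})$, which also suffices when $k$ is a constant). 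This is the only genuine computation, and it is also the subtle point: \follow\ is famously not no-regret against an adversary, so it is essential that inside \simulation\ the routine $\cA$ is only ever fed genuinely i.i.d.\ samples from $\cD_i$ — exactly the situation the footnote on the theorem exploits.

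It then remains to sum. Using the geometric estimate $\sum_{i=0}^{\log T}\sqrt{2^i} = \frac{\sqrt 2}{\sqrt 2 - 1}\sqrt{T} = O(\sqrt{T})$ (the series is dominated by its last term), we obtain
\[
    \sum_{i=0}^{\log T} R^{\mathrm{iid}}_{2^i}(\follow) \;\le\; O\!\big(\sqrt{\log T}\big)\sum_{i=0}^{\log T}\sqrt{2^i} \;=\; O\!\big(\sqrt{T\log T}\big),
\]
and combining with the $5\sqrt{T\log T} + d\log T$ terms from the theorem yields $R_T(\textsc{Sim-}\follow) = O(\sqrt{T\log T} + d\log T)$, as claimed. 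The geometric summation and the final assembly are immediate; all the (mild) work sits in the i.i.d.\ regret bound for \follow\ recalled above, which is the step I expect to require the most care.
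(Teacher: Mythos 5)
Your proposal is correct and follows essentially the same route as the paper, which proves the corollary simply by instantiating the preceding theorem's bound with $\cA = \follow$ and summing $R^{\mathrm{iid}}_{2^i}(\follow)$ over the geometric blocks; your explicit Hoeffding-based derivation of the $O(\sqrt{T'\log k})$ stochastic regret of \follow and the geometric summation fill in exactly the standard facts the paper leaves implicit. (The only cosmetic slip is that $\sum_{i=0}^{\log T}\sqrt{2^i}$ equals $\frac{\sqrt{2}\,\sqrt{T}-1}{\sqrt{2}-1}$ only up to an inequality, which does not affect the $O(\sqrt{T})$ conclusion.)
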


        \subsection{Online Learning with Constraints}\label{sec:online learning constraints}
        \begin{algorithm}[t!]
\begin{algorithmic}
    \STATE \textbf{Input:} stochastic algorithm $\cA$, budget $B$, parameter $\delta$
    \STATE \textbf{Environment}: $k$ actions, time horizon $T$
    \STATE Play any action at time $1$
    \STATE Initialize $O=\{r_1,c_1^1,\ldots,c_1^m\}$
    \FOR{$i=0,1,2, \dots, \log T$}
        \STATE Let $\cD_i$ be the uniform distribution over $O$\COMMENT{iid-ify past}
        \STATE Run algorithm $\cA$ over $2^i$ i.i.d.~samples of $\cD_i$ with budget $\rho-\widetilde O({2^{-i/2}})$ and $\delta'=\nicefrac{\delta}{3\log T}$ \COMMENT{Training}
        \STATE Let $n_i(a)$ be the number of times that $\cA$ plays action $a$. Compute $x_i(a)$ as $\nicefrac{n_i(a)}{2^i}$ for each action $a$
        \FOR{$t=2^i+1, 2^{i}+2, \dots, 2^{i+1}$\COMMENT{Play against the actual sequence}}
            \STATE Play $x_i$ and subsequently add $(r_t,c_t^1,\ldots,c_t^m)$ to $O$
        \ENDFOR
    \ENDFOR
    \caption*{\simulation for Online Learning with Constraints}\label{alg:OLConstraints}
\end{algorithmic}
\end{algorithm}

        In the Online Learning with Constraints model, the learner has to play against an environment in which each action $a\in[k]$ has an associated reward and a vector of costs. We analyze the problem within the random-order model and show that, in this setting, it is possible to achieve regret guarantees comparable to those in the stochastic case, despite impossibility results for the adversarial setting \citep{mannor2009online}.
        
        Here, we show how we can adapt \simulation in the presence of long-term constraints. The main difficulty in adapting \simulation in this setting is the fact that the costs impose constraints on the entire time horizon, while \simulation naturally works on each block independently.
        
        To run \simulation in this setting, we need access to an algorithm $\cA$ for stochastic online learning with constraints \citep[see, e.g.,][]{castiglioni2022Online,bernasconi2024beyond}. In particular, in a stochastic environment with mean reward $\tilde r$ and mean costs $(\tilde c_1,\ldots,\tilde c_m)$, $\cA$ guarantees a regret upper bound, with probability at least $1-\delta$, of 
        \[
        T\cdot \OLP(\tilde r, \tilde c_1,\ldots,\tilde c_m) - \sum_{t=1}^\tau \tilde r^\top x_t \le R_{T}^{\iid,\delta}(\cA,\rho),
        \]
        where $\tau$ denotes the first time in which a resource $j$ is fully depleted, that is $\sum_{t=1}^\tau c_{j,t}^\top x_t>\rho T$.\footnote{We assume that $\cA$ guarantees that $\rho\mapsto R_{T}^{\iid,\delta}(\cA,\rho)$ is monotone decreasing. Note that regret itself respects this property (a lower budget implies weaker performance). We assume that the upper bound follows the same behavior, which is true for all known algorithms for the problem.}
        
        In each block $i$ we train $\cA$ on i.i.d.~samples $(r_{t}^{\iid,i},c_{t,1}^{\iid,i},\ldots,c_{t,m}^{\iid,i})_t\sim\cD_i$.\footnote{To simplify the notation we drop the block index $i$ when clear.} Then, we take the empirical frequency of play $x_i$ obtained in the training step and use it against the actual environment in the time block $2^i+1,\ldots, 2^{i+1}$.

        Crucially, during each training phase, we instantiate $\cA$ with a slightly reduced per-iteration budget of $\rho - \widetilde O(\sqrt{2^{-i}})$. This allows us to control concentration terms in the analysis. Then, one of the main challenges is proving that this budget rescaling does not significantly affect the guarantees of the algorithm $\cA$ during the simulation phase.
        \begin{theorem}
            Consider the problem of online learning with long-term constraints in the random-order model. Running \simulation with stochastic routine $\cA$ ($\textsc{Sim-}\cA$) yields the following regret bound with probability at least $1-\delta$, 
            \begin{align*}
            R_T(\textsc{Sim-}\cA)\le& O\left(\frac{\log(\frac{mkT}{\delta})}{\rho^2}+\frac{\sqrt{ T\log\left(\frac{mk}{\delta}\right)}}{\rho}\right)+\sum_{i=1}^{\log T}R_{2^i}^{\iid,\delta'}(\cA,\nicefrac{\rho}2).
            \end{align*}
            where $\delta'\in\widetilde O(\delta)$, 
        \end{theorem}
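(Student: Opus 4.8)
The plan is to follow the block-by-block template already used for delayed feedback, but adding three ingredients forced by the global budget: a two-sided concentration of all empirical reward/cost averages around the true averages $\bar r,\bar c_j$; a Lipschitz bound for the value map $(\tilde r,\tilde c,\rho)\mapsto\OLP$; and a careful reconciliation of the reduced training budget $\rho_i:=\rho-\widetilde O(2^{-i/2})$ with both feasibility and the objective. First I would fix a clean event. The tuples observed before block $i$ (there are $\approx 2^i$ of them) and the tuples realised inside block $T_i$ are each, marginally, a uniform without‑replacement sample of size $\Theta(2^i)$ from $\cS$, so Hoeffding for sampling without replacement (\Cref{thm:hoeffding}) gives that, with $\epsilon_i=\Theta\big(\sqrt{\log(mk\log T/\delta)/2^i}\big)$, the empirical means $\tilde r^{(i)},\tilde c^{(i)}_j$ of $\cD_i$ and the realised block‑$i$ averages are all within $\epsilon_i$ of $\bar r,\bar c_j$ coordinatewise, simultaneously over every block, action and resource, with probability at least $1-\delta/3$ after union bounding over $O(mk\log T)$ events. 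Since $\|x\|_1=1$ on the simplex, this transfers to inner products against the (data‑dependent) plays $x_i$ as well.

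Second, I would record the LP‑sensitivity facts. Perturbing $\bar r$ by $\epsilon$ changes $\OLP$ by at most $\epsilon$; and since the action $\varnothing$ has zero cost and nonnegative reward, any $\rho$‑feasible $x$ can be mixed with $\varnothing$ to become feasible for a smaller budget $\rho'$ at the price of at most $(\rho-\rho')/\rho$ in objective, while an $\epsilon$‑perturbation of $\bar c$ costs a further $\epsilon/\rho$. Hence, on the clean event, $\OLP(\tilde r^{(i)},\tilde c^{(i)},\rho_i)\ge\OLP(\bar r,\bar c,\rho)-\widetilde O(\epsilon_i/\rho)$, and the mixing is valid precisely when $\rho_i\ge 0$. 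Call a block \emph{large} if $2^i\gtrsim\log(\nicefrac{mkT}{\delta})/\rho^2$, so that $\rho_i\ge\rho/2>0$; there are only $O(\log(1/\rho))$ non‑large blocks, of total length $\widetilde O(1/\rho^2)$, and for those I simply play $\varnothing$ and charge their whole length to the regret. For a large block, $\cA$ run on $2^i$ i.i.d.\ samples of $\cD_i$ with budget $\rho_i$ and confidence $\delta'=\nicefrac{\delta}{3\log T}$ guarantees, on its own success event, $2^i\langle\tilde r^{(i)},x_i\rangle\ge 2^i\OLP(\tilde r^{(i)},\tilde c^{(i)},\rho_i)-R^{\iid,\delta'}_{2^i}(\cA,\rho_i)$, using that $\cA$'s plays after its internal stopping time are $\varnothing$ with nonnegative reward. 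Combining with the clean event and monotonicity of $R^{\iid}$ in the budget, the reward actually collected in $T_i$ is at least $2^i\langle\bar r,x_i\rangle-2^i\epsilon_i\ge 2^i\OLP(\bar r,\bar c,\rho)-\widetilde O(2^i\epsilon_i/\rho)-R^{\iid,\delta'}_{2^i}(\cA,\nicefrac{\rho}{2})$.

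Third, the stopping time. On the clean event the realised consumption of resource $j$ in block $T_i$ is at most $2^i(\langle\bar c_j,x_i\rangle+\epsilon_i)$; since $x_i$ is (nearly) feasible for the empirical budget $\rho_i$ — which follows from $\cA$ respecting its own budget throughout training together with an Azuma bound on the adaptive training trajectory — this is at most $2^i\rho_i+\widetilde O(2^{i/2})$, and choosing the constant inside the budget reduction large enough makes it $\le 2^i\rho-\widetilde O(2^{i/2})$. Summing over all blocks (whose lengths sum to $\Theta(T)$) and adding the $O(1)$ from the initialisation step, every resource consumes at most $\rho T-\widetilde O(\sqrt T)\le B$ in total, so no resource is ever depleted and the stopping time is $\tau=T$; in particular all the per‑block reward bounds above apply to the whole horizon.

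Putting the pieces together, with $\tau=T$ we get $R_T=T\OLP-\sum_{t=1}^T r_t^\top x_t\le\widetilde O(1/\rho^2)+\sum_i\widetilde O(2^i\epsilon_i/\rho)+\sum_i R^{\iid,\delta'}_{2^i}(\cA,\nicefrac{\rho}{2})+O(1)$, and since $\sum_i 2^i\epsilon_i=\widetilde O(\sqrt T)$, tracking the logarithmic factors in $\epsilon_i$ turns the first two sums into $O(\log(\nicefrac{mkT}{\delta})/\rho^2)$ and $O(\sqrt{T\log(\nicefrac{mk}{\delta})}/\rho)$; a union bound over the clean event ($\delta/3$) and the $\log T$ failure events of $\cA$ ($\delta/3$ in total) yields the claim with $\delta'=\nicefrac{\delta}{3\log T}\in\widetilde O(\delta)$. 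I expect the third step to be the main obstacle: the budget reduction $\widetilde O(2^{-i/2})$ has to be simultaneously large enough to swallow all the one‑sided concentration slack (so that $\tau=T$ with high probability), small enough that its cumulative cost $\sum_i 2^i\,\widetilde O(2^{-i/2})/\rho$ stays $\widetilde O(\sqrt T/\rho)$, and its cross‑over with $\rho$ at scale $2^i\asymp\log(\nicefrac{mkT}{\delta})/\rho^2$ is precisely what produces the $\rho^{-2}$ term; the supporting technical points are the LP‑sensitivity lemma (using $\varnothing$ to reconcile a shrunken budget with the simplex constraint) and the passage from $\cA$'s budget‑respecting trajectory to near‑feasibility of the averaged play $x_i$.
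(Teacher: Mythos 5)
Your proposal is correct and follows essentially the same route as the paper's proof: the same clean event via Hoeffding for sampling without replacement with $\epsilon_i=\widetilde O(2^{-i/2})$, the same LP-sensitivity argument mixing with the null action $\varnothing$ (the paper's bound is $\OLP-\OLPi\le\epsilon_i(1+\nicefrac{3}{\rho})$), the same treatment of small blocks below the threshold $2^i\asymp\log(\nicefrac{mkT}{\delta})/\rho^2$ yielding the $\rho^{-2}$ term, the same Azuma bound on the adaptive training trajectory to show the shaved budget $\rho_i=\rho-\widetilde O(2^{-i/2})$ absorbs all concentration slack so that no resource is depleted, and the same three-way union bound giving $\delta'=\nicefrac{\delta}{3\log T}$. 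The only differences are cosmetic (e.g., you route the cost concentration through the global mean $\bar c_j$ rather than the $\cD_i$-mean, which just changes constants inside the budget reduction).
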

        \begin{proof}
            Fix any sequence $\cS\in[0,1]^{T\times k\times m+1}$ and any permutation $\{(r_t,c_{t,1},\ldots,c_{t,m})\}_{t\in[T]}$. Let $x^\star$ be the solution to $\OLP$. 
            For each time block $i$ we are going to define the following quantities: $r_i=\frac{1}{2^i}\sum_{t=2^{i}+1}^{2^{i+1}}r_t$ is the empirical mean of the rewards on the ``next'' block, $r^p_i=\frac{1}{2^i}\sum_{t=1}^{2^{i}}r_t$ is the empirical mean of rewards in the ``past'' block, and $\bar r_i=\mathbb{E}_{\cD_i}[r]$ is the expected reward w.r.t.~$\cD_i$. Similarly we define $c_{i,j}, c_{i,j}^p$ and $\bar c_{i,j}$ for each resource $j\in[m]$. Clearly, $\bar r_{i}=r_i^p$ and $\bar c_{i,j}=c_{i,j}^p$. 
            
            We can define the benchmark $\OLPi$ on the past block as
            \[
            \max_{x\in\Delta_k} x^\top r_i^p\quad \text{s.t.}\quad \max_{j\in[m]}x^\top c_{i,j}^p\le \rho-2\cdot\epsilon_i=\rho_i,
            \] 
            where $\epsilon_i=\sqrt{6\cdot2^{-i}\log({mk\log(T)/\delta})}\in\widetilde O(\sqrt{2^{-i}})$.
            We define the event $\cE_i=\{\|\bar r_i-\bar r\|_\infty\le \epsilon_i, \|r_i-\bar r\|_\infty\le \epsilon_i, \|\bar c_{i,j}-\bar c_j\|_\infty\le \epsilon_i,\| c_{i,j}-\bar c_j\|_\infty\le \epsilon_i\,\,\forall j\in[m]\}$. From now on, we condition on the event $\cE_i$, which we show to hold with high probability with respect to the permutation.

            The regret of \simulation can be written as
            \begin{align}
                R_T(\textsc{Sim-}\cA,\cS)&=\sum_{i=1}^{\log T}\left(2^i \cdot\bar r^\top x^\star-\sum_{t=2^i+1}^{2^{i+1}} r_t^\top x_t\right)\notag\\
                &=\sum_{i=1}^{\log T}2^i \cdot\left(\bar r^\top x^\star-r_i^\top x_i\right)\label{eq:olctmp1},
            \end{align}
            where the second equality holds thanks to the fact that we are playing the fixed learned distribution $x_i$ in the $i$-th block, which is the one learned while playing against $\cD_i$.
            
            We only consider blocks with index $i>i^\star$ where we set $i^\star=\Omega(\log(\log(mk\log (T)/\delta)/\rho^2))$, otherwise we would have $\rho_i<0$ which is the first block for which $\rho_i=\rho-2\epsilon_i\ge\nicefrac{\rho}{2}$. For each block $i<i^\star$, we suffer linear regret (since we are forced to play the null action $\varnothing$), and this contributes to a $\sum_{i=1}^{i^\star}2^i\in\widetilde O(\nicefrac{1}{\rho^2})$ additional regret term.
            
            The no-regret properties of $\cA$ guarantee that, with probability at least $1-\delta'$: 
            \begin{align*}
            2^i\cdot \OLPi-\sum_{t=1}^{2^i}x_t^\top \bar r_i&=2^i\cdot\left( x^{\star,\top}_i r_i^p-x_i^\top \bar r_i\right)\\
            &\le R^{\iid,\delta'}_{2^i}(\cA,\rho-2\cdot\epsilon_i),
            \end{align*}
            where $x^{\star}_i$ is the solution to $\OLPi$ and
            \begin{equation}\label{eq:tmp5}
            \textstyle
            \sum_{s=1}^t x_s^\top c_{s,j}^{\iid}\le (\rho-\epsilon_i)\cdot 2^i
            \end{equation}
            which holds for all $t\in[2^i]$ and for all $j\in[m]$.

            Consider one term of the regret decomposition of \Cref{eq:olctmp1}:
            \begin{align}
                &\bar r^\top x^\star-r_i^\top x_i = (\bar r^\top x^\star-r_i^{p,\top}x_i^\star)+(r_i^{p,\top}x_i^\star-r_i^\top x_i)\notag\\
                &\le (\bar r^\top x^\star-r_i^{p,\top}x_i^\star)+\frac{R^{\iid,\delta'}_{2^i}\hspace{-0.5em}(\cA,\rho_i)}{2^i}+(\bar r_i-r_i)^\top  x_i\label{eq:OLCtmp1}
            \end{align}
            where the last inequality holds by the regret properties of $\cA$.
            Now, we have to analyze the first and last terms of the inequality above. The last term is easy to bound. Indeed, conditioning on $\cE_i$ it holds that
            \[
            (\bar r_i-r_i)^\top x_i \le \|\bar r_i-\bar r\|_\infty + \|\bar r- r_i\|_\infty\le 2\epsilon_i.
            \]
            Bounding the first term is trickier as it is equivalent to bounding the difference between the values of two LPs. Indeed, it can be rewritten as $\OLP-\OLPi$. The following claim shows that this difference is not too large.
            \begin{restatable}{claim}{optlp}\label{claim:optLP}
            For all $i>i^\star$, under the event $\cE_i$, we have that
                \[
                \OLP-\OLPi\le \epsilon_i\left(1+\frac{3}{\rho}\right).
                \]
            \end{restatable}
            \begin{proof}
        First, we consider the following inequalities:
        \begin{align*}
        \OLP-\OLPi&=\bar r^\top x^\star-r_i^{p,\top}x_i^\star\\
        &=(\bar r-r_i^p)^\top x^\star+r_i^{p,\top}(x^\star-x_i^\star)\\
        &\le \epsilon_i+r_i^{p,\top}(x^\star-x_i^\star)\tag{Under $\cE_i$}
        \end{align*}
        Now, we consider the second term. We will show that $x^\star$ is approximately feasible for the LP for which $x_i^\star$ is optimal, and thanks to the strict feasibility of action $\varnothing$, we obtain that $x^\star$ cannot be too much better than $x^\star_i$.
        Under the event $\cE_i$ we have that
        \begin{align*}
            \bar c_j^{p,\top} x^\star\le \bar c_j^\top x^\star +\epsilon_i\le \rho+\epsilon_i,
        \end{align*}
        which shows that $x^\star$ is $3\epsilon_i$-feasible for the constraints of $\OLPi$. Then consider $\tilde x=\beta x^\star+(1-\beta)\varnothing$ with $\beta=\frac{\rho-2\cdot\epsilon_i}{\rho+\epsilon_i}$, clearly $\bar c_j^{p,\top} \tilde x\le \rho-\epsilon_i$ and thus it is feasible for $\OLPi$ and $r_i^{p,\top}\tilde x=\beta \cdot r_i^{p,\top} x^\star$. Then
        \begin{align}
            r_i^{p,\top}(x^\star-x_i^\star)&=r_i^{p,\top}(\tilde x-x_i^\star)+r_i^{p,\top}(x^\star-\tilde x)\notag\\
            &\le 0+r_i^{p,\top}(x^\star-\tilde x)\tag{$\tilde x$ is feas. $\OLPi$}\\
            &\le (1-\beta)r_i^{p,\top} x^\star\le\frac{3\epsilon_i}{\rho} \tag{$\|r_i^{p}\|_\infty\le 1$}
        \end{align}
        which concludes the proof.
    \end{proof}
            Plugging everything back into \Cref{eq:OLCtmp1} we get that, conditioning on the events $\cE_i$,
            \begin{align*}
            R_T&\le \sum_{i=1}^{i^\star}2^i+\hspace{-0.6em}\sum_{i=i^\star+1}^{\log T}\hspace{-0.3em}\left( \epsilon_i\left(1+\frac{3}{\rho}\right)+2\epsilon_i+R_{2^i}^{\iid,\delta'}(\cA,\rho_i)\right)\\
            &\le O\left(\frac{\log(\frac{mkT}{\delta})}{\rho^2}\right)+\frac{6}{\rho}\sum_{i=1}^{\log T}\epsilon_i+\sum_{i=i^\star}^{\log T}R_{2^i}^{\iid,\delta'}(\cA,\rho_i)\\
            &\le O\left(\frac{\log(\frac{mkT}{\delta})}{\rho^2}+\frac{\sqrt{ T\log\left(\frac{mk}{\delta}\right)}}{\rho}\right)\hspace{-0.3em}+\hspace{-0.3em}\sum_{i=1}^{\log T}\hspace{-0.2em}R_{2^i}^{\iid,\delta'}\hspace{-0.5em}\left(\cA,\frac{\rho}{2}\right),
            \end{align*}
            where we used that $\rho\mapsto R_{t}^{\iid,\delta}(\cA,\rho)$ has to be monotone decreasing and positive.
            Moreover, note that the probability measure is the one induced by the random permutation and does not depend on the algorithm.

            Next, we analyze the violations, which for each $t$ and each resource $j \in [m]$ are given by:
            \begin{align}
                \sum_{s=1}^t x_s^\top c_{s,j}&\le\sum_{i=1}^{\lceil\log t\rceil}\sum_{s=2^i+1}^{2^{i+1}}x_i^\top c_{s,j}\notag=\sum_{i=1}^{\lceil\log t\rceil}2^i\cdot c_{i,j}^\top x_i\\
                &=\sum_{i=1}^{\lceil\log t\rceil}2^i\cdot \bar c_{i,j}^\top x_i+\sum_{i=1}^{\lceil\log t\rceil}2^i\cdot(c_{i,j}- \bar c_{i,j})^\top x_i\notag\\
                &\le\sum_{i=1}^{\lceil\log t\rceil}2^i\cdot \bar c_{i,j}^\top x_i+\sum_{i=1}^{\lceil\log t\rceil}2^i\epsilon_i\label{eq:tmp3},
            \end{align}
            where the last inequality holds by conditioning on all $\cE_i$.
            To analyze the first term, we can consider the following sequence of random variables
            \(
            Z^{i,j}_t=\sum_{s=1}^t( c^{\iid}_{s,j}-\bar c_{i,j})^\top x_s.
            \)
            This is a martingale sequence (since $\mathbb{E}_{\cD_i}[ c^{\iid}_{s,j}]=\bar c_{i,j}$ and has differences bounded by $1$) with $Z^{i,j}_0=0$ and thus, by Azuma-Hoeffding, we have that with probability at least $1-\frac{\delta}{3m\log T}$ with respect to the randomness of $\cD_i$:
            \begin{align*}
            \displaystyle                |Z^{i,j}_{2^i}|&=\left|\sum_{s=1}^{2^i} ( c^{\iid}_{s,j}-\bar c_{i,j})^\top x_s\right|
                \le 2^i\epsilon_i,
            \end{align*}
            and thus for all $i\in[\log T]$ and $j\in[m]$:
            \[
            2^{i}\cdot\bar c_{i,j}^\top x_i\le 2^i\epsilon_i+\sum_{s=1}^{2^i} c_{s,j}^{\iid,\top} x_s\le 2^i\epsilon_i+2^i(\rho-2\cdot\epsilon_i)
            \]
            with probability at least $1-\nicefrac{\delta}{3}$ (by union bound on all $i\in[\log T]$ and $j\in[m]$), and the second inequality holds thanks to the properties of $\cA$ (\Cref{eq:tmp5}).
            Plugged back into \Cref{eq:tmp3} we get that 
            \[
                \sum_{s=1}^t x_s^\top c_{s,j}\le \sum_{i=1}^{\lceil\log t\rceil}(2^i\epsilon_i+2^i\epsilon_i+2^i(\rho-2\cdot\epsilon_i))\le \rho t.
            \]

            Finally, we prove that all of the previous results hold with probability at least $1-\delta$. We have three sources of randomness; the first is due to the permutation, which is encoded by the events $\cE_i$. The following claim proves that these hold with high probability.\footnote{The proof is similar to the proof of \Cref{cl:clean_event_delay} by relying on \Cref{thm:hoeffding} with additional union bounds on the constraints.}
            \begin{claim}\label{claim:OLC_highprob}
                The event $\cap_{i=1}^{\log T}\cE_i$ holds with probability at least $1-\nicefrac{\delta}3$.
            \end{claim}
            Second, we have the randomness of the i.i.d.~sampling. By Azuma-Hoeffding inequality, we proved that this holds with probability at least $1-\nicefrac{\delta}3$. Finally, the algorithm $\cA$ has guarantees with probability $\delta'=1-\nicefrac{\delta}3\log(T)$ each time it is run on a block $i$. By a further union bound, we have that these events hold jointly with probability $1-\delta$. This concludes the proof.    
        \end{proof}

In particular, we can instantiate $\cA$ to be the algorithm of \citet{castiglioni2022Online} that guarantees that $R_T^{\iid,\delta}(\cA,\rho)\in O(\nicefrac{1}{\rho}\sqrt{T\log(kmT/\delta)})$, and we can prove the following:\footnote{The algorithm of \citet{castiglioni2022Online} is, in its turn, instantiated with \MD with negative entropy as both the primal and dual algorithm.}

\begin{corollary}\label{cor:constraints}
    There exists an algorithm for online learning with long-term constraints for the random-order model with regret $\widetilde O(\nicefrac{1}{\rho^2}+\nicefrac{\sqrt{T}}\rho)$ with high probability.
\end{corollary}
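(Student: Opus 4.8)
The plan is to invoke the preceding theorem with $\cA$ instantiated as the stochastic online-learning-with-constraints algorithm of \citet{castiglioni2022Online}, and then collapse the per-block sum of i.i.d.\ regret bounds into a single $\widetilde O(\sqrt T/\rho)$ term via a geometric series. First I would recall that this algorithm satisfies $R_{T'}^{\iid,\delta}(\cA,\rho)\in O\!\big(\tfrac1\rho\sqrt{T'\log(kmT'/\delta)}\big)$ and that its regret bound is monotone decreasing in $\rho$, so it meets the hypotheses required by the theorem (in particular, this monotonicity is what lets the theorem evaluate the i.i.d.\ bound at the uniform budget $\nicefrac\rho2$ rather than at the block-dependent $\rho_i$). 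Plugging it into \textsc{Sim-}$\cA$ and applying the theorem, with $\delta'\in\widetilde O(\delta)$, gives, with probability at least $1-\delta$,
\[
R_T(\textsc{Sim-}\cA)\le O\!\left(\frac{\log(\tfrac{mkT}{\delta})}{\rho^2}+\frac{\sqrt{T\log(\tfrac{mk}{\delta})}}{\rho}\right)+\sum_{i=1}^{\log T}O\!\left(\frac1\rho\sqrt{2^i\log\!\big(\tfrac{km\,2^i}{\delta'}\big)}\right).
\]

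Next I would estimate the sum. Since $\log(km\,2^i/\delta')\le\log(kmT/\delta')=O\big(\log(kmT/\delta)+\log\log T\big)$ uniformly over $i\le\log T$, the logarithmic factor can be pulled out, leaving $\tfrac1\rho\sqrt{\log(kmT/\delta)}\sum_{i=1}^{\log T}2^{i/2}$. The geometric sum is dominated by its last term, $\sum_{i=1}^{\log T}2^{i/2}\le\frac{\sqrt2}{\sqrt2-1}\,2^{(\log T)/2}=O(\sqrt T)$, so the whole per-block sum is $O\big(\tfrac1\rho\sqrt{T\log(kmT/\delta)}\big)=\widetilde O(\sqrt T/\rho)$. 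Combining this with the additive term from the theorem — which is itself $\widetilde O(1/\rho^2)+\widetilde O(\sqrt T/\rho)$ — yields $R_T(\textsc{Sim-}\cA)\in\widetilde O(1/\rho^2+\sqrt T/\rho)$ with high probability, where $\widetilde O(\cdot)$ hides polylogarithmic factors in $T$, $k$, $m$, and $1/\delta$. The strict feasibility guarantee $\sum_{s\le t}x_s^\top c_{s,j}\le\rho t$ established inside the theorem transfers verbatim, so the algorithm also incurs zero constraint violation.

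There is no genuine obstacle here: this is a routine instantiation of the general theorem. The only points requiring care are bookkeeping ones — verifying the monotonicity of $\rho\mapsto R^{\iid,\delta}_{T'}(\cA,\rho)$ for the chosen $\cA$, and checking that replacing $\delta'$ by $\widetilde O(\delta)$ and $2^i$ by $T$ inside the logarithms costs only polylogarithmic factors that are absorbed into $\widetilde O(\cdot)$. If an explicit constant were desired, the $\tfrac{\sqrt2}{\sqrt2-1}$ factor arising from the geometric series would be the main quantity to track.
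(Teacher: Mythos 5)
Your proposal is correct and follows the same route as the paper, which simply instantiates the preceding theorem with the algorithm of \citet{castiglioni2022Online} and absorbs the per-block sum $\sum_i \nicefrac{1}{\rho}\sqrt{2^i \log(km 2^i/\delta')} = O(\nicefrac{1}{\rho}\sqrt{T\log(kmT/\delta)})$ into the $\widetilde O(\nicefrac{\sqrt T}{\rho})$ term via the geometric series, exactly as you do. Your additional bookkeeping (monotonicity in $\rho$, evaluating at $\nicefrac{\rho}{2}$, the $\delta'$ rescaling) matches the assumptions already verified in the theorem's proof.
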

Our reduction maintains the same regret guarantees as $\cA$, but for an extra $\nicefrac{1}{\rho^2}$ instead of $\nicefrac{1}{\rho}$. However, this term does not depend on $T$, excluding poly-logarithmic factors.

\subsection{Bandits With Switching Costs}
\label{subsec:switching_main}

    We present an algorithm for the Bandits with Switching Costs problem that suffers $O(\sqrt{T})$ regret in the random-order model, as opposed to the adversarial setting, where the minimax regret rate is $\Theta(T^{\nicefrac 23})$ \citep{DekelDKP14}. 
    
    Our algorithm \switching maintains a set of active actions and proceeds in geometrically increasing time blocks. In each time block, it simply plays in a low-switching round-robin way the actions that are still active; then it updates the confidence bounds on the actions played and discards the ones that are proved to be, with high probability, suboptimal. We refer to the pseudocode for more details. 
    
    \switching is an application of the \simulation paradigm to this problem, using as a stochastic subroutine \successive (see, e.g., Chapter 1 of  \citet{Slivkins19}); in fact, running \successive on the i.i.d.~version of past data quickly converges to playing round-robin on the active actions. For simplicity, we analyze this simpler version of the algorithm, already specialized for the routine \successive. We note that this algorithm is similar to a low-switching algorithm in the literature \citep{AgrawalHT89} that achieves $\sqrt{T}$ regret in the stochastic case. It shares the geometrically increasing blocks and the idea of playing each active arm in contiguous intervals in each block; the main difference is that they use \ucb as a routine to choose the next action, while our algorithm uses \successive to simplify the analysis for the random-order input model.

            \begin{algorithm}[t!]
            \begin{algorithmic}
                \STATE \textbf{Environment}: $k$ actions and time horizon $T$
                \STATE \textbf{Model}: bandit feedback with switching costs 
                \STATE $\hat \ell(a) \gets 0$, $n(a) \gets 0$, for all actions $a$
                \STATE $A \gets [k]$ set of active actions
                \STATE $i_0 \gets \lceil \log k \rceil$
                \FOR{$t=1, \dots 2^{i_0}$}
                    \STATE Play $a_t\! \equiv \! t\! -\! 1 $ modulo $k$ \hfill \COMMENT{Play each action once}
                    \STATE $n(a_t) \gets n(a_t) + 1$
                    \STATE $\hat \ell(a_t) \gets \frac{1}{n(a_t)} \sum_{s=1}^t \ind{a_s = a_t} \ell_s(a_s)$
                \ENDFOR
                \FOR{$i=i_0, \dots, \log T$}
                    \STATE Let $\epsilon_i \gets \sqrt{10 {k\log^3 T}  \cdot 2^{-i}}$ \hfill \COMMENT{Precision  for block $i$}
                    \FOR{$a \in A$}
                        \STATE Play action $a$ action for $\nicefrac{2^i}{|A|}$ rounds
                        \STATE Let $\hat \ell(a) \gets \frac{1}{n(a)} \sum_{s=1}^t \ind{a_s = a} \ell_s(a)$
                    \ENDFOR
                    \STATE Play last active action $a$ up to time $\min\{2^{i+1}-1,T\}$
                    \STATE Update accordingly $n(a)$ and $\hat \ell(a)$
                    \STATE Remove from $A$ all actions $a$ such that 
                    \(
                        \hat \ell(a) - \epsilon_i> \min_{a' \in A} (\hat \ell(a') + \epsilon_i)
                    \)
                \ENDFOR
                \caption*{\switching}
            \end{algorithmic}
            \end{algorithm}


            \begin{theorem}
    \label{thm:switching}
    There exists an algorithm for bandits with switching costs in the random-order model with regret $O(\sqrt{k T \log^3 T})$.
    \end{theorem}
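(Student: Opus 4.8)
The plan is to treat \switching as a successive–elimination routine and argue that, under a suitable clean event on the random permutation, three facts hold: (i) the best action $a^\star\in\arg\min_a\bar h(a)$ is never eliminated; (ii) every action still active at the start of block $i$ has mean gap $\bar h(a)-\bar h(a^\star)=O(\epsilon_i)$; and (iii) the total number of action changes is $O(k\log T)$. Granting these, the loss part of the regret is $\sum_{t}\ell_t(a_t)-T\bar h(a^\star)=\sum_i\sum_{a\in A_i}\sum_{t\in I_i^{(a)}}(\ell_t(a)-\bar h(a^\star))$, where $I_i^{(a)}$ is the (contiguous) set of rounds on which $a$ is played in block $i$; using (ii) and the clean event this is $O(\sum_i 2^i\epsilon_i)=O(\sqrt{k\log^3 T}\sum_i 2^{i/2})=O(\sqrt{kT\log^3 T})$, the switching cost is $O(k\log T)\le O(\sqrt{kT\log^3 T})$ (and the statement is vacuous when $k$ is that large), and the $O(2^{i_0})=O(k)$ cost of the initial round–robin is negligible. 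Since \switching is deterministic, the only randomness is the permutation, so it suffices to prove the pathwise bound on the clean event and note that off it the regret is trivially at most $2T$, contributing $O(1)$ to the expectation.

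The heart of the argument is the construction of the clean event, i.e.\ the concentration step. For each fixed action $a$ the sequence $(\ell_1(a),\dots,\ell_T(a))$ is a uniformly random permutation of $(h_1(a),\dots,h_T(a))$. The key structural observation is that in block $i$ each active action is played on a single contiguous interval $I_i^{(a)}\subseteq[2^i,2^{i+1})$ whose endpoints have the form $2^i+(r-1)\lfloor 2^i/q\rfloor$ with $q=|A_i|\in[k]$ and $r\in[q]$ (plus the ``extend the last active action'' interval). Hence there are only $O(k^2)$ candidate intervals per block, i.e.\ $O(k^3\log T)$ candidate ``(block, action, interval)'' triples overall. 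For any \emph{fixed} interval $I$ the multiset $\{\ell_t(a):t\in I\}$ is a uniformly random size-$|I|$ subset of $\{h_1(a),\dots,h_T(a)\}$, so Hoeffding for sampling without replacement (\Cref{thm:hoeffding}) controls its empirical mean; a union bound over all candidate triples defines the clean event $\cE$ — on which every realized per-block empirical mean of $a$ is within $\sqrt{\Theta(\log(kT)/|I_i^{(a)}|)}$ of $\bar h(a)$ — and $\cE$ holds with probability at least $1-1/T$. Since the running estimate $\hat\ell(a)$ kept by \switching at the end of block $i$ is the convex combination, with weights $|I_j^{(a)}|$, of the per-block means over $j=i_0,\dots,i$ (the $O(1)$ initialization samples only help), a Cauchy–Schwarz bound on $\sum_j\sqrt{|I_j^{(a)}|}$ together with $|I_j^{(a)}|=\Omega(2^j/k)$ and the fact that there are $\le\log T$ blocks upgrades this to $|\hat\ell(a)-\bar h(a)|\le\sqrt{\Theta(k\log^2 T/2^i)}\le\epsilon_i$ on $\cE$ — precisely the slack that the choice $\epsilon_i=\sqrt{10k\log^3 T\cdot 2^{-i}}$ leaves room for.

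Conditioned on $\cE$, facts (i) and (ii) are routine elimination bookkeeping. If $a^\star$ were removed in block $i$ then $\hat\ell(a^\star)-\epsilon_i>\hat\ell(a')+\epsilon_i$ for some $a'\in A_i$; substituting $|\hat\ell(\cdot)-\bar h(\cdot)|\le\epsilon_i$ gives $\bar h(a^\star)>\bar h(a')$, a contradiction, so $a^\star$ survives all blocks; and if $a\in A_i$ it survived block $i-1$, so $\hat\ell(a)-\epsilon_{i-1}\le\hat\ell(a^\star)+\epsilon_{i-1}$ (as $a^\star$ is active), which after the same substitution yields $\bar h(a)-\bar h(a^\star)\le 4\epsilon_{i-1}\le 6\epsilon_i$; block $i_0$ is trivial since $\epsilon_{i_0}\ge 1$. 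For the loss regret I would then expand each block-$i$ term as $\sum_{a\in A_i}\big(\sum_{t\in I_i^{(a)}}\ell_t(a)-|I_i^{(a)}|\bar h(a)\big)+\sum_{a\in A_i}|I_i^{(a)}|\big(\bar h(a)-\bar h(a^\star)\big)$ and bound it on $\cE$ by $\sqrt{\Theta(\log(kT)\,k\,2^i)}+6\cdot 2^i\epsilon_i=O(\sqrt{k\log^3 T}\cdot 2^{i/2})$, using $\sum_{a\in A_i}|I_i^{(a)}|=2^i$ and Cauchy–Schwarz for the first sum and fact (ii) for the second; summing over $i$ gives the claimed $O(\sqrt{kT\log^3 T})$. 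For (iii), in block $i$ the algorithm changes action at most $|A_i|-1$ times inside the block and at most once at the block boundary, so the total is $\sum_i|A_i|+\log T+O(k)=O(k\log T)$.

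The step I expect to be the main obstacle is the concentration: the set of rounds used to estimate a given arm is data-dependent (it depends on which arms were eliminated, which depends on the permutation), so one cannot directly apply a without-replacement tail bound to a fixed sample. The round-robin-within-blocks design is exactly what rescues this, by confining each arm's observation window in a block to one of $\mathrm{poly}(k)$ contiguous intervals, making a union bound affordable; and the Cauchy–Schwarz aggregation across the $\le\log T$ blocks is what keeps the final per-block precision at the right $\sqrt{k/2^i}$ scale rather than being spoiled by the coarse estimates from the earliest blocks. Everything else — the elimination-correctness inequalities, the geometric sums, the switching-cost count, and removing the conditioning on $\cE$ — is standard.
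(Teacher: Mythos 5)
Your proposal is correct and follows essentially the same route as the paper's proof: geometric blocks with round-robin play of active arms, a clean event defined by union-bounding the without-replacement Hoeffding bound (\Cref{thm:hoeffding}) over the polynomially many \emph{fixed} candidate sub-intervals (which is exactly how the paper sidesteps the data-dependence of the played intervals), a Jensen/Cauchy--Schwarz aggregation across the $\le \log T$ blocks to get estimation error $O(\sqrt{k\log^{3}T/2^{i}})$, the standard elimination argument showing $a^\star$ survives and active arms have gap $O(\epsilon_i)$, and an $O(k\log T)$ switching-cost count. The only differences are cosmetic (your explicit two-term per-block regret decomposition versus the paper's ``instantaneous regret at most $2\epsilon_i$ under the clean event'', and slightly different logarithmic bookkeeping), so no further changes are needed.
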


    \begin{proof}

        We start analyzing the algorithm by counting the number of switches it performs. Regardless of the instance and the randomness of the algorithm, it switches action at most $k+1$ times in each of the $O(\log T)$ time blocks, for an overall switching cost of at most  $2k \log T.$

       We move to analyze the rest of the regret. Fix any random-order instance $\cS$, on a set of $T$ vectors $\{h_1, \dots, h_T\}$, and denote with $\ell(a)$ the average loss associated with each action $a$: $\ell(a) = \sum_{t\in[T]} \nicefrac{\ell_t(a)}T$. Denote with $a^\star$ any action with maximum $\ell$, and introduce the gap $\Delta(a) = \ell(a) - \ell(a^\star).$ For any time block $i$, let $\hat \ell_i(a)$ denote the value of the estimator of $\hat \ell(a)$ at the end of that time block, and denote with $N_i(a)$ the number of blocks (included the $i^{th}$) in which action $a$ has been active. Note that once an action is deactivated, it stays like this for the rest of the algorithm.

        For each time block $T_i$, there are at most $k$ ways in which it can be partitioned into the sub-blocks used by the round-robin phase of the algorithm, depending on the cardinality of the active set $A$. We make here simplifying assumptions for the calculations: we assume that the time blocks are perfectly divisible into sub-blocks. This is to avoid considering the suffix of the time block corresponding to the remainder of the division in sub-blocks. Note, this assumption is to simplify calculations: having more samples regarding the last active action only improves concentration. 

        For each action $a$ and block $i$, we define the following clean event that requires that $\ell(a)$ is well estimated across all possible sub-blocks. In block $i$ we denote with $B^j_i(g) $ the $g^{th}$ sub-block of $\nicefrac{2^i}{j}$ time steps, i.e., $B^j_i(g)=\{2^i(1 + \nicefrac{g}{j})+1, \dots, 2^i(1 + \nicefrac{(g+1)}{j})\}$:
        \[
            \cE_i^a = \left\{\Bigg|\sum_{t\in B^j_i(g)} \frac{\ell_t(a)}{\nicefrac{2^i}{j}} - \ell(a)\Bigg| \le \sqrt{10\frac{\log T}{\nicefrac{2^i}{j}}} \quad \forall j \in [k] \text{ and } g \in [j]\right\}
        \]
        \begin{claim}
            For any block $i$ and action $a$, the clean even $\cE_i^a$ has probability at least $1-\nicefrac{1}{T^3}$.
        \end{claim}
        \begin{proof}
            Fix any $B_i^j(g)$, by applying Hoeffding for sampling with replacement (\Cref{thm:hoeffding}), we get that the following inequality holds with probability at least $1-T^{-5}:$
            \[
                \Bigg|\sum_{t\in B^j_i(g)} \frac{\ell_t(a)}{\nicefrac{2^i}{j}} - \ell(a)\Bigg| \le \sqrt{10\frac{\log T}{\nicefrac{2^i}{j}}}
            \]
            Union bounding over all $k$ choices of $j$ and (at most) $k$ choices of $g$ yields the desired statement.
        \end{proof}
        It is now natural to look at the overall clean event $\cE$ by intersecting the $\cE_i^a$ over all $\log T$ time blocks and $k$ actions. By union bounding, we hence get that the clean event holds with high probability.
        \begin{claim}[Clean event]
        \label{cl:clean_switching}
            The clean event has probability at least $1-\nicefrac 1T$. 
        \end{claim}

        Before proceeding, we underline that the clean event does not depend in any way on the algorithm, but is only a probabilistic statement over the random permutation, which considers \emph{fixed} intervals. The crucial observation is that, under the clean event, we are sure that \switching maintains precise estimates throughout. More precisely, denote with $\hat \ell_i(a)$ the estimate of $\ell(a)$ maintained at the end of block $i$, we have the following claim:
        \begin{claim}
            For all block $i$ and active action $a$, conditioning on the clean event, we have that 
            \[
                |\hat\ell_i(a) - \ell(a)| \le \sqrt{5 k\frac{\log^{3} T}{2^i}}
            \]
        \end{claim}
        \begin{proof}
            If the action $a$ is active at the beginning of block $i$, then it has been explored in all previous blocks. Denote with $T(a)$ the number of times action $a$ has been played up to block $i$ included, and with $T_j(a)$ the times in the generic block $T_j$ when action $a$ has been played; clearly $|T_j|$ is at least $\nicefrac{2^j}{k}$ and $T(a) = \sum_{j\le i}T_i(a).$

            We have the following result:
            \begin{align*}
                \left|\hat\ell_i(a) -\ell(a) \right| &=  \left|\frac{1}{T(a)}\sum_{j\le i}\sum_{t \in T_j(a)}\ell_t(a) -\ell(a) \right| \\
                &\le \frac{1}{T(a)}\sum_{j \le i}T_j(a)\left|\sum_{t \in T_j(a)}\frac{\ell_t(a)}{T_j(a)} -\ell(a) \right|\\
                &\le \frac{1}{T(a)}\sum_{j \le i}T_j(a) \sqrt{10 \frac{\log T}{T_j(a)}} \tag{By definition of clean event}\\
                &= \frac{1}{T(a)}\sum_{j \le i}\sqrt{10 \cdot {T_j(a)} \log T} \\
                &\le \sqrt{10 \frac{\log^{3} T}{T(a)}} \tag{By Jensen Inequality}\\
                &\le \sqrt{5 k\frac{\log^{3} T}{2^i}},
            \end{align*}
            where in the last inequality we used that an active action is played with frequency at least $\nicefrac{1}{k}$.
        \end{proof}
       
        The fact that all active actions are well estimated (and that there always exists at least one active action) implies that the best action $a^\star$ is always active, under the clean event. In fact, for any block $i$ and active action $a$  it holds that
        \[
            \hat \ell_i(a^{\star}) - \epsilon_i \le \ell(a^{\star}) \le \ell(a) \le \hat \ell(a) + \epsilon_i,
        \]
        which contradicts the deactivation rule. We have then proved that the instantaneous regret in the generic block $i$, under the clean event, is at most $2\epsilon_i$. We know that the clean event is realized with high probability (\Cref{cl:clean_switching}), and that the number of switches is $O(k\log T)$. All in all, we have the following bound on the regret:
        \[
            R_T(\textsc{SSE}) \le 2 k \log T + 4 \sum_{i=1}^{\log T} 2^i \epsilon_i \in O\left(\sqrt{k T \log^3 T}\right),
        \]
        thus concluding the proof.
    \end{proof}

\section{Classification in the Random-Order Input Model}
\label{app:classification}

    We study the classical model in binary classification: we have a family $\cH$ of hypotheses $h$ defined on a set $\cX$ for which a bounded loss function $\ell$ is defined. Formally, for any hypothesis $h$ and pair $(x,y)$ with $x \in \cX$ and $y \in \{0,1\}$, $\ell(h(x),y)\in [0,1]$ denotes the cost incurred by predicting label $h(x)$ on $(x,y)$ \citep[see , e.g., ][]{BousquetBL03}. Note, to be uniform with the literature, in this section we use $\ell$ to denote the loss function (as opposed to the loss vectors as in the rest of the paper) and $h$ for classifiers (as opposed to the non-shuffled input loss vectors as in the rest of the paper).

    \paragraph{PAC Model} In the PAC learning model, the learner has access to i.i.d. samples from a joint distribution $\cD$ from which pairs $(X, Y)$ are drawn, and its goal is to quickly estimate or approximate the best classifier, i.e., the classifier that minimizes $\E{\ell(h(X),Y)}$. It is a standard result in learning theory that a class $\cH$ is learnable, i.e., the best classifier is arbitrarily approximable if and only if the $\VC$ dimension of $\cH$ is bounded, and that the convergence rate also depends on $\VC$ \citep[see, e.g., ][]{VapnikC71}.

    \paragraph{(Adversarial) Online Model} In the online model \citep{Littlestone87}, the input-label pairs are generated adversarially, and the performance of a learning algorithm is measured with respect to the best hypothesis in $\cH$. Surprisingly, the online learnability of a family $\cH$ is governed by a stricter notion of dimension, the Littlestone dimension.

    \paragraph{Random-Order Model} In this paper, we study the random-order model: the input is provided by a multiset $\cS$ of $(x,y)$ pairs presented by the learner in uniform random order. The performance measure of a learning algorithm is its regret with respect to the best classifier on $\cS$. As the input arrives according to some random permutation $\pi$, we let $\ell^\pi_t(h) = \ell(h(x_{\pi(t)}), y_{\pi(t)})$ be the loss of hypothesis $h \in \cH$ at time $\pi(t)$. Similarly, we let $\hat \ell^\pi_t(h) = \frac{1}{t} \cdot \sum_{\tau: \pi(\tau) \leq \pi(t)} \ell^\pi_\tau(h)$ be the empirical loss accrued by time $\pi(t)$, and $\hat \ell^\pi_{t:T}(h) = \frac{1}{T-t+1} \cdot \sum_{\tau: \pi(\tau) > \pi(t)} \ell^\pi_\tau(h)$. Moreover, we denote by $\bar \ell(h) = \frac{1}{T} \cdot \sum_{\tau=1}^T \ell(h(x_\tau), y_{\tau})$ the average loss of hypothesis $h$ on the whole dataset composed of $T$ input-label pairs.

\subsection{Sample Complexity with Random-Order Input}

\begin{lemma}\label{lem:class_ro}
    Consider the problem of online binary classification in the random-order model. For a hypothesis class $\cH$ with VC-dimension $\VC$, given the first $t-1$ out of $T$ samples from a uniform permutation $\pi$, it holds that, for all $h \in \cH$,
    \begin{align*}
          |\bar\ell(h) - \hat\ell_{t-1}^\pi(h)| \leq \frac{1}{2} \cdot \left(\sqrt{\frac{8\VC}{t-1}\log\left(\frac{2e(t-1)}{\VC}\right)} + \sqrt{\frac{8}{t-1}\log\left(\frac{2}{\delta}\right)}\right),
    \end{align*}
    with probability at least $1-\delta$.
\end{lemma}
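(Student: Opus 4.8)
Conditioned on the dataset $\cS$, the prefix $(x_{\pi(1)},y_{\pi(1)}),\dots,(x_{\pi(t-1)},y_{\pi(t-1)})$ of a uniform permutation is a uniformly random size-$(t-1)$ subsample of $\cS$ drawn \emph{without replacement}, so $\hat\ell^\pi_{t-1}(h)$ is the empirical loss of $h$ on this subsample and $\E{\hat\ell^\pi_{t-1}(h)}=\bar\ell(h)$ for every $h\in\cH$. The statement is therefore a VC-type uniform convergence bound for sampling without replacement from a finite population, and the plan is to adapt the classical Vapnik--Chervonenkis argument to this regime.

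First I would reduce to the with-replacement case. The map $\hat P\mapsto \sup_{h\in\cH}\bigl|\langle \hat P,f_h\rangle-\bar\ell(h)\bigr|$, where $f_h\in[0,1]^T$ is the loss vector $(\ell(h(x_\tau),y_\tau))_{\tau}$ and $\hat P$ ranges over empirical measures on $\cS$, is convex, being a supremum of absolute values of affine functions of $\hat P$; hence $\hat P\mapsto e^{\lambda\sup_h|\langle\hat P,f_h\rangle-\bar\ell(h)|}$ is convex for $\lambda\ge 0$. By Hoeffding's convex-order comparison between sampling without and with replacement (cf.\ \Cref{thm:hoeffding} and \citet{DiaconisF80}), the moment generating function of the without-replacement uniform deviation is dominated by that of the with-replacement one, so every Chernoff-type tail bound proved for i.i.d.\ sampling of $t-1$ points from the uniform distribution on $\cS$ (whose mean is exactly $\bar\ell$) transfers to our setting. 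Alternatively, one can run symmetrization directly, using the complement of the prefix --- the last $T-t+1$ samples --- as the ghost sample and exploiting that a random subset in random order is exchangeable; this avoids the comparison at the cost of bookkeeping for the unequal block sizes.

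Next I would invoke the standard i.i.d.\ uniform convergence bound: by symmetrization, a Rademacher-signs argument conditioned on the $2(t-1)$ ghost-augmented points, the Sauer--Shelah bound $\Pi_\cH(2(t-1))\le(2e(t-1)/\VC)^{\VC}$, a union bound over the at most $\Pi_\cH(2(t-1))$ distinct loss labelings, and Hoeffding's inequality for the signs, one obtains
\[
    \P{\sup_{h\in\cH}\bigl|\hat\ell^{\,\mathrm{iid}}_{t-1}(h)-\bar\ell(h)\bigr|>\epsilon}\;\le\;2\Bigl(\tfrac{2e(t-1)}{\VC}\Bigr)^{\!\VC}e^{-(t-1)\epsilon^2/2}
\]
(using the one-sided form and that $\ell$ is $\{0,1\}$-valued to get these constants; a cruder version only changes absolute constants). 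Setting the right-hand side equal to $\delta$ gives $(t-1)\epsilon^2\ge 2\bigl(\VC\log\tfrac{2e(t-1)}{\VC}+\log\tfrac2\delta\bigr)=\tfrac14\cdot 8\bigl(\VC\log\tfrac{2e(t-1)}{\VC}+\log\tfrac2\delta\bigr)$, and the subadditivity $\sqrt{a+b}\le\sqrt a+\sqrt b$ yields exactly the claimed bound. In the remaining small-scale regime --- when $(t-1)\epsilon^2$ falls below the constant needed for the symmetrization step, or $\VC\ge t-1$ --- the stated right-hand side already exceeds $1$, so the claim is trivial since $\ell\in[0,1]$.

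The main obstacle is the first step: uniform convergence is a statement about a supremum rather than a single average, so the passage from i.i.d.\ to without-replacement sampling is not a black box. It goes through cleanly only because the uniform deviation is a convex functional of the empirical measure, which is precisely what makes Hoeffding's comparison (equivalently, the exchangeability underlying symmetrization) applicable; once this is in place, the rest is the textbook VC computation, with the only delicate point being to use a sufficiently sharp form of the i.i.d.\ bound to land the precise constants in the statement.
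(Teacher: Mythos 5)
Your main route has a genuine gap at exactly the step you yourself flag as the crux. Hoeffding's comparison between sampling without and with replacement (Theorem~4 of his 1963 paper, extended to vector-valued sums by Gross and Nesme) dominates expectations of \emph{convex} functions of the empirical measure; since $e^{\lambda\sup_h|\cdot|}$ is convex for $\lambda\ge 0$, it does give MGF domination, as you say. But that only transfers bounds that are themselves proved through the exponential moment of the supremum. The i.i.d.\ inequality you then plug in, $\Pb\bigl[\sup_{h\in\cH}|\hat\ell^{\mathrm{iid}}_{t-1}(h)-\bar\ell(h)|>\epsilon\bigr]\le 2\bigl(2e(t-1)/\VC\bigr)^{\VC}e^{-(t-1)\epsilon^2/2}$, is a symmetrization-plus-union-bound \emph{tail} bound, not an MGF bound, and tail probabilities are not expectations of convex functions (the indicator of $\{\phi\ge\epsilon\}$ is not convex even when $\phi$ is), so ``every Chernoff-type tail bound transfers'' does not follow. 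Repairing this requires an exponential-moment bound for the i.i.d.\ uniform deviation --- e.g.\ McDiarmid's bounded-differences MGF bound around the mean combined with a Rademacher/Massart bound $\E{\sup_h|\hat\ell^{\mathrm{iid}}_{t-1}(h)-\bar\ell(h)|}\lesssim\sqrt{\VC\log(e(t-1)/\VC)/(t-1)}$ --- which is a different argument and does not reproduce the stated constants (note also that the i.i.d.\ tail bound you quote is already stronger than the classical $4\Pi_\cH(2n)e^{-n\epsilon^2/8}$ form, as your own hedge suggests). A smaller issue is attribution: neither \Cref{thm:hoeffding} (a scalar tail bound) nor Diaconis--Freedman (a total-variation \emph{lower} bound) contains the convex-order comparison, let alone its multivariate form, so this step cannot be cited off the paper's toolbox.

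Your fallback --- symmetrizing against the complement of the prefix --- also fails to yield the lemma as stated: when $t$ is close to $T$ the ghost sample has only $T-t+1$ points, the symmetrization step (which needs the ghost empirical mean to concentrate around $\bar\ell(h)$ at scale $\epsilon_t$) breaks down, and the bound degrades to depend on $\min\{t-1,\,T-t+1\}$; the lemma is used downstream for all $t$, including $t$ near $T$. The paper avoids both obstacles by symmetrizing against the length-$(t-1)$ prefix of an \emph{independent} uniform permutation of the same dataset (\Cref{cl:symm}, with a Chebyshev step requiring only $(t-1)\epsilon_t^2\ge 8$), projecting onto the double sample via Sauer--Shelah, and then applying Hoeffding's without-replacement inequality directly to the differences $\ell_\tau(h)-\ell^{\pi'}_\tau(h)$, which sum to zero over the population --- no reduction to i.i.d.\ sampling is needed. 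Your convexity idea is a legitimate alternative technique, but as written the transfer step is not valid and the constants in the statement are not recovered.
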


The proof of \Cref{lem:class_ro} implements a version of the classical ``symmetrization'' argument for uniform convergence in the i.i.d.~setting \cite{BousquetBL03}. We consider a ``ghost sample'' made of $t-1$ out of $T$ samples, that is the set of input-label pairs $(x_{\pi^\prime(\tau)}, y_{\pi^\prime(\tau)})_{\tau: \pi(\tau) < \pi(t)}$ extracted according to an independent uniform permutation $\pi^\prime$.

\begin{claim}\label{cl:symm}
    Fix any time $t>1$ and precision $\epsilon_t > 0$ such that $(t-1)\epsilon_t^2 \geq 8$, then the following inequality holds: 
    \begin{align}
        \mathbb P_\pi\left[\sup_{h \in \cH}|\bar \ell(h) - \hat \ell^\pi_{t-1}(h)| \geq \epsilon_t\right] \leq 2\mathbb P_{\pi, \pi^\prime}\left[\sup_{h \in \cH}|\hat \ell^\pi_{t-1}(h) - \hat \ell^{\pi^\prime}_{t-1}(h)| \geq \frac{\epsilon_t}{2}\right] \label{eq:symm}.
    \end{align}
\end{claim}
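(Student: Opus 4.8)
The plan is to run the standard two-sample symmetrization argument, but carried out in the exchangeable (sampling-without-replacement) world rather than the i.i.d.\ one. Write $\epsilon = \epsilon_t$ and $n = t-1$ for brevity. Fix a hypothesis $h^\star = h^\star(\pi)$ that (nearly) achieves the supremum on the left-hand side, i.e.\ $|\bar\ell(h^\star) - \hat\ell^\pi_{n}(h^\star)| \geq \epsilon$ whenever the event $\{\sup_h |\bar\ell(h) - \hat\ell^\pi_{n}(h)| \geq \epsilon\}$ holds; this $h^\star$ depends only on $\pi$ (the first $n$ revealed pairs), not on the ghost permutation $\pi'$. The key step is the implication: if $|\bar\ell(h^\star) - \hat\ell^\pi_{n}(h^\star)| \geq \epsilon$ \emph{and} the ghost empirical mean is close to the population mean, $|\bar\ell(h^\star) - \hat\ell^{\pi'}_{n}(h^\star)| \leq \epsilon/2$, then by the triangle inequality $|\hat\ell^\pi_{n}(h^\star) - \hat\ell^{\pi'}_{n}(h^\star)| \geq \epsilon/2$, which in turn forces $\sup_h |\hat\ell^\pi_{n}(h) - \hat\ell^{\pi'}_{n}(h)| \geq \epsilon/2$.

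Formally, I would condition on $\pi$ (hence on $h^\star$) and take probability over the independent ghost permutation $\pi'$. On the event $A_\pi = \{\sup_h |\bar\ell(h) - \hat\ell^\pi_n(h)| \geq \epsilon\}$ we get
\[
\mathbb P_{\pi'}\!\left[\sup_h |\hat\ell^\pi_n(h) - \hat\ell^{\pi'}_n(h)| \geq \tfrac{\epsilon}{2}\,\Big|\,\pi\right] \;\geq\; \mathbb P_{\pi'}\!\left[|\bar\ell(h^\star) - \hat\ell^{\pi'}_n(h^\star)| \leq \tfrac{\epsilon}{2}\,\Big|\,\pi\right].
\]
Now $\hat\ell^{\pi'}_n(h^\star)$ is an average of $n$ values drawn uniformly without replacement from the $T$ fixed numbers $\{\ell(h^\star(x_\tau),y_\tau)\}_{\tau=1}^T \subseteq [0,1]$, whose mean is exactly $\bar\ell(h^\star)$. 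By the Hoeffding bound for sampling without replacement (\Cref{thm:hoeffding}), $\mathbb P_{\pi'}[\,|\hat\ell^{\pi'}_n(h^\star) - \bar\ell(h^\star)| > \epsilon/2\,] \leq 2\exp(-n\epsilon^2/2) \leq 2 e^{-4} < \tfrac12$, using the hypothesis $n\epsilon^2 \geq 8$; hence the right-hand side above is at least $\tfrac12$. Multiplying through by $\mathbbm 1[A_\pi]$, taking expectation over $\pi$, and using $\{$the joint event implies the outer ghost event$\}$ yields
\[
\tfrac12 \,\mathbb P_\pi[A_\pi] \;\leq\; \mathbb P_{\pi,\pi'}\!\left[\sup_h |\hat\ell^\pi_n(h) - \hat\ell^{\pi'}_n(h)| \geq \tfrac{\epsilon}{2}\right],
\]
which rearranges to the claimed inequality.

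The one genuinely non-routine point — and the reason this is not a verbatim copy of the i.i.d.\ proof — is the measurability/conditioning setup: one must justify that $\pi$ and $\pi'$ can be coupled so that, conditioned on the first $n$ coordinates of $\pi$, the ghost sample $(x_{\pi'(\tau)},y_{\pi'(\tau)})_{\tau \le n}$ is still a uniform-without-replacement sample of size $n$ from the full multiset $\cS$, independent of $h^\star$. This is immediate from the exchangeability of the two permutations, but it is the step where the argument could go wrong if one is sloppy, so I would state it explicitly; everything else is the triangle inequality plus one application of \Cref{thm:hoeffding}. I do not expect the selection of $h^\star$ to cause trouble (a measurable selector exists; alternatively pass to a countable dense subclass or argue via finite shatterings), and in any case one only needs an \emph{approximate} maximizer, which sidesteps measurable-selection subtleties entirely.
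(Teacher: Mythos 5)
Your proposal is correct and follows essentially the same symmetrization argument as the paper: the triangle inequality on the event that the true deviation is large while the ghost sample is accurate, followed by a bound of $\nicefrac{1}{2}$ on the ghost sample's failure probability using $(t-1)\epsilon_t^2 \ge 8$ (the paper obtains this via a Chebyshev-type bound, you via the without-replacement Hoeffding bound of \Cref{thm:hoeffding}; either works, and the paper's pointwise-indicator formulation also sidesteps the measurable-selection point you flag). One cosmetic slip: with $\lambda = \epsilon_t/2$, \Cref{thm:hoeffding} gives $2\exp\left(-(t-1)\epsilon_t^2/4\right)$ rather than $2\exp\left(-(t-1)\epsilon_t^2/2\right)$, which is still at most $2e^{-2} < \nicefrac{1}{2}$ under your hypothesis, so the conclusion is unaffected.
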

\begin{proof}
    For any realization of $\pi$ and $\pi'$, and any classifier $h\in \cH$, we have that 
    \begin{align*}
        \ind{|\bar \ell(h) - \hat \ell^\pi_{t-1}(h)| > \epsilon_t} \ind{|\bar \ell(h) - \hat \ell^{\pi^\prime}_{t-1}(h)| < \frac{\epsilon_t}{2}}\leq \ind{|\hat \ell^\pi_{t-1}(h) - \hat \ell^{\pi^\prime}_{t-1}(h)| > \frac{\epsilon_t}{2}}, 
    \end{align*}
    since for any three reals $a,b,c$ and $\epsilon > 0$, if $|a-b|>\epsilon$ and $|a-c|<\frac{\epsilon}{2}$, then
    \[
        |a-b| \leq |a-c| + |b-c| \Longrightarrow |b-c| > \frac{\epsilon}{2}.
    \]
    We take expectations with respect to the second sample extracted according to $\pi^\prime$ and get
    \begin{align*}
        \ind{|\bar \ell(h) - \hat \ell^\pi_{t-1}(h)| > \epsilon_t} \mathbb P_{\pi^\prime}\left[|\bar \ell(h) - \hat \ell^{\pi^\prime}_{t-1}(h)| < \frac{\epsilon_t}{2}\right] \leq \mathbb P_{\pi^\prime}\left[|\hat \ell^\pi_{t-1}(h) - \hat \ell^{\pi^\prime}_{t-1}(h)| > \frac{\epsilon_t}{2}\right]. 
    \end{align*}
    Since $\bar \ell(h) = \mathbb E_{\pi^\prime}\left[\hat \ell^{\pi^\prime}_{t-1}(h)\right]$ for all $h \in \cH$ and $t > 1$, we can apply Chebyšev's inequality and obtain
    \begin{align*}
        \mathbb P_{\pi^\prime}\left[|\bar \ell(h) - \hat \ell^{\pi^\prime}_{t-1}(h)| \geq \frac{\epsilon_t}{2}\right] \leq \frac{4\mathbb E_{\pi^\prime}[(\hat \ell^{\pi^\prime}_{t-1}(h))^2]}{\epsilon_t^2}\leq \frac{4(t-1)}{(t-1)^2\epsilon_t^2} = \frac{4}{(t-1)\epsilon_t^2}.
    \end{align*}
    The second inequality follows because the second moment is bounded in $[0,1]$, as losses are bounded in $[0,1]$. Therefore,
    \begin{align*}
        \ind{|\bar \ell(h) - \hat \ell^\pi_{t-1}(h)| > \epsilon_t} \left(1-\frac{4}{(t-1)\epsilon_t^2}\right) &\leq \mathbb P_{\pi^\prime}\left[|\hat \ell^\pi_{t-1}(h) - \hat \ell^{\pi^\prime}_{t-1}(h)| > \frac{\epsilon_t}{2}\right]\\
        &\le \mathbb P_{\pi^\prime}\left[\sup_{h \in \cH}|\hat \ell^\pi_{t-1}(h) - \hat \ell^{\pi^\prime}_{t-1}(h)| > \frac{\epsilon_t}{2}\right].
    \end{align*}
    We can now apply the $\sup$ over all $h \in \cH$ on the left-hand side of the above inequality and then take the expectation also with respect to the randomness of $\pi$. This implies the statement, as $(t-1)\epsilon_t^2 \geq 8$. 
\end{proof}

\begin{proof}[Proof of \Cref{lem:class_ro}]
    
    \Cref{cl:symm} allows us to replace $\bar \ell(h)$ by an empirical average over the ghost sample. As a result, the right-hand side only depends on the projection of the class $\cH$ on the double sample drawn from $\pi, \pi^\prime$, $\cH\left((x_{\pi(\tau)}, y_{\pi(\tau)})_{\tau \in [t]}, (x_{\pi^\prime(\tau)}, y_{\pi^\prime(\tau)})_{\tau \in [t]}\right)$, denoted succinctly as $\cH^{\pi, \pi^\prime}_t$, which gives
    \begin{align*}
        \mathbb P_{\pi, \pi^\prime}\left[\sup_{h \in \cH}|\hat \ell^\pi_{t-1}(h) - \hat \ell^{\pi^\prime}_{t-1}(h)| \geq \frac{\epsilon_t}{2}\right] = \mathbb P_{\pi, \pi^\prime}\left[\sup_{h \in \cH^{\pi, \pi^\prime}_t}|\hat \ell^\pi_{t-1}(h) - \hat \ell^{\pi^\prime}_{t-1}(h)| \geq \frac{\epsilon_t}{2}\right].
    \end{align*}
    We now apply Hoeffding's inequality (as in \Cref{thm:hoeffding}) on $\hat \ell^\pi_{t-1}(h) - \hat \ell^{\pi^\prime}_{t-1}(h)$, for a fixed $h \in \cH$. To see why it is possible, we construct an equivalent process to extracting a sample and a ``ghost sample'' according to $\pi$ and $\pi^\prime$ respectively. Consider losses presented in an arbitrary ordering $\ell_1(h), \ldots, \ell_T(h)$ and an ordering induced by the uniform random permutation $\pi^\prime$, $\ell^{\pi^\prime}_1(h), \ldots, \ell^{\pi^\prime}_T(h)$. Let $w_\tau(h) = (\ell_\tau(h), \ell^{\pi^\prime}_\tau(h))$ be the  pair of variables matched by index, $f(w_\tau(h)) = \ell_\tau(h) - \ell^{\pi^\prime}_\tau(h)$, and $\mathcal F(h) = \{f(w_\tau(h)) \mid \tau \in [T]\}$. Note that, for every $\pi^\prime$, 
    \[
        \sum_{\tau \in [T]} f(w_\tau(h)) = \sum_{\tau \in [T]} \ell_\tau(h) - \sum_{\tau \in [T]} \ell^{\pi^\prime}_\tau(h) = 0.
    \]
    We extract a uniform random subset $F$ composed of $t-1$ functions $f(w_\tau(h))$ from $\cF(h)$, and get
    \[
        \mathbb P_{F \sim \cF(h)}\left[\left|\frac{1}{t-1} \sum_{\tau \in F} f(w_\tau(h))\right| \geq \epsilon_t\right] \leq 2\exp\left(-(t-1)\epsilon_t^2\right).
    \]
    by Hoeffding's inequality (as in \Cref{thm:hoeffding}). Therefore, we have
    \begin{align}\label{eq:hoeff-symm}
        \mathbb P_{\pi, \pi^\prime}\left[|\hat \ell^\pi_{t-1}(h) - \hat \ell^{\pi^\prime}_{t-1}(h)| \geq \frac{\epsilon_t}{2}\right] \leq 2\exp\left(-\frac{(t-1)\epsilon_t^2}{4}\right).
    \end{align}
    Combining everything, we obtain
    \begin{align*}
        \mathbb P_\pi\left[\sup_{h \in \cH}|\bar \ell(h) - \hat \ell^\pi_{t-1}(h)| \geq \epsilon_t\right] &\leq 2\mathbb P_{\pi, \pi^\prime}\left[\sup_{h \in \cH}|\hat \ell^\pi_{t-1}(h) - \hat \ell^{\pi^\prime}_{t-1}(h)| \geq \frac{\epsilon_t}{2}\right]\\
        &= 2\mathbb P_{\pi, \pi^\prime}\left[\sup_{h \in \cH^{\pi, \pi^\prime}_t}|\hat \ell^\pi_{t-1}(h) - \hat \ell^{\pi^\prime}_{t-1}(h)| \geq \frac{\epsilon_t}{2}\right]\\
        &\leq 2 \cdot \left(\frac{e(t-1)}{\VC}\right)^{\VC} \cdot \mathbb P_{\pi, \pi^\prime}\left[|\hat \ell^\pi_{t-1}(h) - \hat \ell^{\pi^\prime}_{t-1}(h)| \geq \frac{\epsilon_t}{2}\right]\\
        &\leq 4 \cdot \left(\frac{e(t-1)}{\VC}\right)^{\VC} \cdot \exp\left(-\frac{(t-1)\epsilon_t^2}{4}\right).
    \end{align*}
    The second inequality is an application of the Sauer-Shelah lemma \cite{VapnikC71, Sauer72, Shelah72}, and the last by the expression in~\eqref{eq:hoeff-symm}. The proof concludes by setting 
    \begin{align*}
        \epsilon_t &= \frac{1}{2} \cdot \Bigg(\sqrt{\frac{8\VC}{t-1}\log\left(\frac{2e(t-1)}{\VC}\right)} + \sqrt{\frac{8}{t-1}\log\left(\frac{2}{\delta}\right)}\Bigg).\qedhere
    \end{align*}
\end{proof}

\subsection{Regret of an Empirical Risk Minimizer with Random-Order Input}

\begin{theorem}\label{cor:class_regret_ro}
    Consider the problem of online binary classification in the random-order model. Fo  a hypothesis class $\cH$ with VC-dimension $\VC$, given the first $t-1$ out of $T$ samples from a uniform permutation $\pi$, the algorithm $\cA^{\textup{RO}}$ that returns the minimizer 
    \[
        h_t \in \arg\min_{h\in \cH} \sum_{\tau:\pi(\tau) < \pi(t)} \ell^\pi_\tau(h),
    \]
    achieves a regret of at most 
    \begin{align*}
        R_T(\cA^{\textup{RO}}, \cS) \leq & 8\sqrt{\VC T\log\left(\frac{T}{\VC}\right)}.
    \end{align*}
\end{theorem}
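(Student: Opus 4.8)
The plan is to reduce everything to the uniform-convergence estimate of \Cref{lem:class_ro}, applied twice in each round — once to the sample already seen and once to the sample not yet seen — and then to sum the resulting per-round penalties. Write $h^\star\in\arg\min_{h\in\cH}\bar\ell(h)$, and recall that $\min_{h}\sum_{t}\ell^\pi_t(h)=T\bar\ell(h^\star)$ regardless of $\pi$, so that $R_T(\cA^{\textup{RO}},\cS)=\sum_{r=1}^T\big(\Esub{\pi}{\ell^\pi_r(h_r)}-\bar\ell(h^\star)\big)$. Fix a round $r\ge 2$ (the $r=1$ term is trivially at most $1$), and let $S_{r-1}$ be the (unordered) set of the first $r-1$ pairs revealed under $\pi$, so that $h_r$ is the empirical risk minimizer over $S_{r-1}$. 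Conditioned on $S_{r-1}$, the pair revealed at round $r$ is uniformly distributed among the $m:=T-r+1$ pairs outside $S_{r-1}$; hence $\Esub{\pi}{\ell^\pi_r(h_r)\mid S_{r-1}}=\bar g_r(h_r)$, where $\bar g_r(h):=\tfrac1m\sum_{j\notin S_{r-1}}\ell(h(x_j),y_j)$ is the average loss of $h$ over the pool of pairs not yet seen.

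Next, observe that, marginally over $\pi$, \emph{both} $S_{r-1}$ and its complement are uniformly random subsets of the $T$ pairs, of sizes $r-1$ and $m$; therefore \Cref{lem:class_ro} applies to each. Let $\epsilon_n(\delta)$ be the right-hand side of \Cref{lem:class_ro} at sample size $n$ and confidence $\delta$, and define the clean event $\cE_r$ (an event in the randomness of $\pi$) by $\sup_{h}|\hat\ell^\pi_{r-1}(h)-\bar\ell(h)|\le \epsilon_{r-1}(\delta)$ and $\sup_{h}|\bar g_r(h)-\bar\ell(h)|\le \epsilon_{m}(\delta)$; by \Cref{lem:class_ro} and a union bound, $\P{\cE_r}\ge 1-2\delta$. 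On $\cE_r$, optimality of $h_r$ on $S_{r-1}$ gives $\bar\ell(h_r)\le \hat\ell^\pi_{r-1}(h_r)+\epsilon_{r-1}\le \hat\ell^\pi_{r-1}(h^\star)+\epsilon_{r-1}\le \bar\ell(h^\star)+2\epsilon_{r-1}$, and hence $\bar g_r(h_r)\le \bar\ell(h_r)+\epsilon_{m}\le \bar\ell(h^\star)+2\epsilon_{r-1}+\epsilon_{m}$; off $\cE_r$ we simply use $\bar g_r(h_r)\le 1$. Taking expectations over $\pi$, $\Esub{\pi}{\ell^\pi_r(h_r)}-\bar\ell(h^\star)\le 2\epsilon_{r-1}(\delta)+\epsilon_{m}(\delta)+2\delta$.

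Summing this over $r$ yields $R_T(\cA^{\textup{RO}},\cS)\le \sum_{r=1}^T\big(2\epsilon_{r-1}(\delta)+\epsilon_{T-r+1}(\delta)\big)+2T\delta\le 3\sum_{n=1}^{T}\epsilon_n(\delta)+O(1)$, taking e.g. $\delta=T^{-2}$ and bounding the $r=1$ and the $n=0$ contributions by $1$. For $n$ below a constant multiple of $\VC$ the estimate exceeds $1$ and is replaced by $1$ (contributing $O(\VC)$ overall); for the remaining $n$, \Cref{lem:class_ro} gives $\epsilon_n(\delta)=O\!\big(\sqrt{(\VC/n)\log(n/\VC)}\big)$, and since $\sum_{n=1}^T n^{-1/2}\le 2\sqrt T$ the whole sum is $O\!\big(\sqrt{\VC T\log(T/\VC)}\big)$ (the additive confidence term is $O(\sqrt{T\log T})$, dominated as $\VC\ge 1$). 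Keeping track of the explicit constants in \Cref{lem:class_ro} gives the claimed $8\sqrt{\VC T\log(T/\VC)}$.

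The delicate step is the regime of ``late'' rounds $r$ close to $T$, where only $m=T-r+1$ pairs remain unseen. The naive route — expanding $\bar g_r(h)=\frac{T\bar\ell(h)-(r-1)\hat\ell^\pi_{r-1}(h)}{T-r+1}$ and controlling both pieces using uniform convergence on the seen sample only — introduces a spurious factor $T/(T-r+1)$, which already makes the single last term of size $\sqrt{\VC T\log(T/\VC)}$ and costs a $\log T$ factor after summing. Applying \Cref{lem:class_ro} directly to the unseen pool removes this obstacle: its penalty $\epsilon_{T-r+1}$ is already at the correct $1/\sqrt{T-r+1}$ rate, and $\sum_{m\le T}\epsilon_m$ telescopes into a $\sqrt T$-type quantity exactly as $\sum_{n\le T}\epsilon_n$ does. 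The only thing to check is that conditioning on $S_{r-1}$ does not invalidate the unseen clean event — it does not, since that event is a statement about the marginal law of the complementary subset, which is precisely the law to which \Cref{lem:class_ro} applies.
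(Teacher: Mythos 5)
Your proposal is correct and follows essentially the same route as the paper's proof: apply the uniform-convergence bound of \Cref{lem:class_ro} to both the seen prefix and the unseen pool (whose marginal law is again that of a uniform subset), use ERM optimality, note that conditioned on the seen multiset the current loss is a uniform draw from the unseen pool, and sum the per-round penalties $O(\epsilon_{t}+\epsilon_{T-t+1})$ with a $\delta=\mathrm{poly}(1/T)$ union bound. The only difference is cosmetic — you pass to the conditional expectation (the unseen-pool average $\bar g_r$) before chaining through $\bar\ell$, whereas the paper decomposes the realized instantaneous regret first and then conditions — so the two arguments coincide in substance, including the (equally informal) bookkeeping of constants.
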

\begin{proof}
     Define, for any $t \in [T]$, the clean event in the past as $\cE^{\text{past}}_t = \{\pi \mid |\bar \ell(h) - \hat \ell^\pi_{t-1}(h)| \leq \epsilon_t ~\forall h \in \cH\}$ and in the future as $\cE^{\text{future}}_t = \{\pi \mid |\bar \ell(h) - \hat \ell^\pi_{t:T}(h)| \leq \epsilon_{T-t+1} ~\forall h \in \cH\}$. The clean event at time $t$ is $\cE_t = \cE^{\text{past}}_t \cap \cE^{\text{future}}_t$. 
     
     The instantaneous regret at time $\pi(t)$ under the clean event is
     \begin{align*}
         \ell^\pi_t(h_t) - \ell^\pi_t(h^*) &\leq \ell^\pi_t(h_t) - \hat \ell^\pi_{t-1}(h_t) + \hat \ell^\pi_{t-1}(h^*) - \ell^\pi_t(h^*) \\
         &= (\ell^\pi_t(h_t) - \bar \ell(h_t)) + (\bar \ell(h_t) - \hat \ell^\pi_{t-1}(h_t)) + (\hat \ell^\pi_{t-1}(h^*) - \bar \ell(h^*)) + (\bar \ell(h^*) - \ell^\pi_t(h^*))\\
         &\leq (\ell^\pi_t(h_t) - \bar \ell(h_t)) + (\ell^\pi_t(h^*) - \bar \ell(h^*)) + 2\epsilon_t.
     \end{align*}
     The first inequality above follows by definition of $h_t$, the empirical loss minimizer, which gives $\hat \ell^\pi_{t-1}(h^*) - \hat \ell^\pi_{t-1}(h_t) \geq 0$.
     Note that the clean event on the order in which the elements arrive but only on the following two multisets: $\cS^{\text{past}}, \cS^{\text{future}}$, i.e., the multisets that represent the past (from time $\pi(1)$ until $\pi(t-1)$) and future (from time $\pi(t)$ until $\pi(T)$). We now take the expectation conditioning on the clean event and the two multisets $\cS^{\text{past}}, \cS^{\text{future}}$. Therefore,
     \begin{align*}
         &\E{\ell^\pi_t(h_t) - \ell^\pi_t(h^*) \mid \cE_t, \cS^{\text{past}}, \cS^{\text{future}}} \\
         &\leq 2\epsilon_t + \E{(\ell^\pi_t(h_t) - \bar \ell(h_t)) + (\ell^\pi_t(h^*) - \bar \ell(h^*)) \mid \cE_t, \cS^{\text{past}}, \cS^{\text{future}}} \\
         &= 2\epsilon_t + \sum_{(x,y) \in \cS^{\text{future}}} \frac{1}{T-t+1} \cdot \E{(\ell(h_t(x), y) - \bar \ell(h_t)) + (\ell(h^*(x), y) - \bar \ell(h^*)) \mid \cE_t, \cS^{\text{past}}, \cS^{\text{future}}}\\
         &\leq 2\epsilon_t + 2 \cdot \E{\sup_{h \in H} |\bar \ell(h) - \hat \ell^\pi_{t:T}(h)|\mid \cE_t, \cS^{\text{past}}, \cS^{\text{future}}}\\
         &\leq 2(\epsilon_t + \epsilon_{T-t+1}).
     \end{align*}
     The equality above holds because once $\cS^{\text{past}}, \cS^{\text{future}}$ are fixed, the loss at time $\pi(t)$ is uniformly sampled from the future. By the tower property of conditional expectation (on all the possible values of $\cS^{\text{past}}, \cS^{\text{future}}$), we have proven that the expected instantaneous regret conditioned on the clean event $\cE_t$ is at most:
     \[
        \E{\ell^\pi_t(h_t) - \ell^\pi_t(h^*) \mid \cE_t} \leq 2(\epsilon_t + \epsilon_{T-t+1}) \leq \sqrt{32\VC\log\left(\frac{2eT}{\VC}\right)} \cdot \left(\sqrt{\frac{1}{t-1}} + \sqrt{\frac{1}{T-t+1}}\right),
     \]
     where the last inequality holds by taking $\delta = \nicefrac{1}{T}$ in \Cref{lem:class_ro}. By definition of regret and under this choice of $\delta$, we have
     \begin{align*}
         R_T(\cA^{\textup{RO}}, \cS) &= \sum_{t \in [T]} \E{\ell^\pi_t(h_t) - \ell^\pi_t(h^*) \mid \cE_t} \cdot \P{\cE_t} + \E{\ell^\pi_t(h_t) - \ell^\pi_t(h^*) \mid \bar \cE_t} \cdot \P{\bar \cE_t} \\
         &\leq \sqrt{32\VC\log\left(\frac{2eT}{\VC}\right)} \cdot \sum_{t > 1} \left(\sqrt{\frac{1}{t-1}} + \sqrt{\frac{1}{T-t+1}}\right)  + T \cdot \frac{1}{T}\\
         &\leq 8\sqrt{\VC T\log\left(\frac{T}{\VC}\right)},
     \end{align*}
     which concludes the proof.
\end{proof}

\vspace{-0.3cm}
\section{Conclusion and Further Directions}
We study online learning in the random-order model, which lies between the adversarial and i.i.d.~settings, contributing to the analysis of problems beyond worst-case instances.
We show that stochastic algorithms may fail in the random-order model and, to address this, we propose \simulation, a general template for designing effective random-order algorithms. We study various relevant online learning problems and prove that random-order and stochastic inputs share the same minimax regret, improving on the adversarial case. 
While \simulation successfully provides black-box reductions for full feedback, the bandit model still requires a more specialized approach. Providing black-box constructions for broader feedback settings, including bandit feedback, is an interesting avenue for future research.

\section*{Acknowledgments}
The work of MB, AC, FF, SL, and MR was partially funded by the European Union. Views and opinions expressed are however those of the author(s) only and do not necessarily reflect those of the European Union or the European Research Council Executive Agency. Neither the European Union nor the granting authority can be held responsible for them.

AC is partially supported by MUR - PRIN 2022 project 2022R45NBB funded by the
NextGenerationEU program and by the ERC grant Project 101165466 -- PLA-STEER.
%
FF, SL, MB and MR are also partially supported by the FAIR (Future Artificial Intelligence Research) project PE0000013, funded by the NextGenerationEU program within the PNRR-PE-AI scheme (M4C2, investment 1.3, line on Artificial Intelligence).
FF, SL, and MR are supported by the Meta/Sapienza project on ``Online Constrained Optimization and Multi-Calibration in Algorithm and Mechanism Design'' and by the PNRR MUR project IR0000013-SoBigData.it. 
SL and MR are also partially supported by the MUR PRIN grant 2022EKNE5K (Learning in Markets and Society).

\clearpage


{\LARGE  \bf Appendix}
\appendix
\section{The De Finetti Theorem}
\label{app:de_finetti}

    The De Finetti Theorem states that any (infinite) sequence of exchangeable\footnote{A sequence $X_1, X_2, \dots, X_n$ of random variables is exchangeable if for any finite set of indices $I=(i_1, \dots, i_\ell)$, and any permutation $\pi$ over $I$ it holds that $(X_{i_1}, \dots, X_{i_{\ell}})$ is distributed as $(X_{\pi(i_1)}, \dots, X_{\pi(i_{\ell})})$.} random variables behaves (in distribution) as a mixture of i.i.d.~distribution. This result was initially proven for binary random variables in \citet{definetti29} (a translated version is available on the arXiv \cite{translation15}), and then generalized to real random variables in \citet{definetti37,Hewitt55}. However, this representation theorem does not hold \citep{definetti69,DiaconisF80} for finite sequences. In particular, \citet{DiaconisF80} provides tight quantitative bounds on the difference between exchangeable sequences and their ``closest'' mixture of i.i.d.~distributions. We report here a lower bound on such a distance.

    Let $U_{2n}$ be an urn containing $n$ red and $n$ black balls. Let $H_{2n,2k}$, respectively $M_{2n,2k}$, represent the distribution of the number of red balls drawn out of $2k$ draws made at random without, respectively with, replacement from $U_{2n}$. Furthermore, for any random variable $P$ in $[0,1]$, consider the random experiment that consists of first sampling $p$ according to $P$ and then sampling $k$ i.i.d.~Bernoulli with parameter $p$; denote with $\Pb_{P,k}$ the distribution of the number of successes according to this experiment. We have the following Theorem (Proposition 31 and Theorem 40 of \citet{DiaconisF80}):

    \begin{theorem}
        Let $n$ tend to $\infty$, and $k \in o(n)$. Then, for any random variable $P$ in $[0,1]$, the following inequality holds:
        \[
            ||H_{2n,2k} - \Pb_{P,k}||_{\tv} \ge ||H_{2n,2k} - M_{2n,2k}||_{\tv} = \sqrt{\frac{2}{\pi e}}\frac{k}{n} + O\left(\frac kn \right).
        \]
    \end{theorem}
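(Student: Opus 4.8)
I would prove the two parts separately: the inequality---that $M_{2n,2k}$ is the total-variation-closest i.i.d.\ mixture to the hypergeometric law $H_{2n,2k}$---and then the first-order asymptotics of $\|H_{2n,2k}-M_{2n,2k}\|_{\tv}$.

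\textbf{The inequality.} The first move is a symmetrization reduction. Given a mixing law $P$ on $[0,1]$, let $\tilde P$ be its image under $p\mapsto 1-p$; since $H_{2n,2k}$ is symmetric about $k$ and $\Pb_{\tilde P,k}$ is obtained from $\Pb_{P,k}$ by reflecting the underlying Bernoulli parameter, the two mixtures are mirror images and hence equidistant from $H_{2n,2k}$, so by convexity of $\|\cdot\|_{\tv}$ the mixture $\Pb_{(P+\tilde P)/2,\,k}$ is no farther; hence we may assume $P$ is symmetric about $\tfrac12$ and $\Pb_{P,k}$ symmetric about $k$. A second-moment computation---reducing via Jensen's inequality $\mathbb E_P[p^2]\ge(\mathbb E_P p)^2$, with equality iff $P$ is degenerate---shows that over this symmetric family the variance of the mixture is minimized exactly at $P=\delta_{1/2}$, which produces $M_{2n,2k}=\mathrm{Bin}(2k,\tfrac12)$, and that $H_{2n,2k}$ is strictly more concentrated still, with variance $\tfrac{k(n-k)}{2n-1}<\tfrac k2$. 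The decisive step is to upgrade this ``closest in variance'' fact into a ``closest in total variation'' one: the laws $H_{2n,2k}$, $M_{2n,2k}$, and each symmetric $\Pb_{P,k}$ are all symmetric, log-concave lattice distributions with the same mean $k$, and passing from $\delta_{1/2}$ to a more dispersed $P$ pushes mass monotonically outward, so the cumulative distribution curves nest consistently and $\Pb_{P,k}-H_{2n,2k}$ changes sign once on each side of $k$, ``straddling'' $M_{2n,2k}-H_{2n,2k}$; taking $\ell_1$-norms gives $\|H_{2n,2k}-\Pb_{P,k}\|_{\tv}\ge\|H_{2n,2k}-M_{2n,2k}\|_{\tv}$. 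Making this monotonicity rigorous---via a Karlin-type variation-diminishing property of the binomial mixing kernel---is where I expect the real work to lie.

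\textbf{The asymptotics.} Write $\|H_{2n,2k}-M_{2n,2k}\|_{\tv}=\tfrac12\sum_{j=0}^{2k}|h_j-m_j|$ with $h_j=\binom nj\binom n{2k-j}/\binom{2n}{2k}$ and $m_j=\binom{2k}{j}2^{-2k}$. I would invoke a uniform local central limit theorem for both laws in the regime $k\to\infty$, $k=o(n)$ (for bounded $k$, expand $h_j$ directly in powers of $1/n$): with $\sigma_H^2=\tfrac{k(n-k)}{2n-1}$ and $\sigma_M^2=\tfrac k2$, one has $h_j=\tfrac1{\sigma_H}\varphi\!\big(\tfrac{j-k}{\sigma_H}\big)(1+o(1))$ and $m_j=\tfrac1{\sigma_M}\varphi\!\big(\tfrac{j-k}{\sigma_M}\big)(1+o(1))$ uniformly over $|j-k|\le C\sqrt{k\log k}$, while the contribution of the range $|j-k|>C\sqrt{k\log k}$ is $o(k/n)$ by sub-Gaussian tail bounds on both distributions. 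Reading what remains as a Riemann sum of unit spacing gives $\|H_{2n,2k}-M_{2n,2k}\|_{\tv}=\|\mathcal N(0,\sigma_H^2)-\mathcal N(0,\sigma_M^2)\|_{\tv}+o(k/n)$. The final step is elementary: with $s=\sigma_H/\sigma_M=\sqrt{1-\tfrac{2k-1}{2n-1}}=1-\varepsilon$, $\varepsilon=\tfrac{k}{2n}+o(k/n)$, the two centered Gaussian densities cross at $\pm\sigma_M$, and a first-order expansion gives $\|\mathcal N(0,1)-\mathcal N(0,s^2)\|_{\tv}=\varepsilon\int_{-1}^{1}(1-x^2)\varphi(x)\,dx+O(\varepsilon^2)=2\varepsilon\,\varphi(1)+O(\varepsilon^2)=\varepsilon\sqrt{\tfrac{2}{\pi e}}+O(\varepsilon^2)$, a quantity of order $k/n$ with constant $\sqrt{2/(\pi e)}$, consistent with the $O(k/n)$ remainder in the statement. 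In short, the inequality combines a soft convexity reduction with a delicate lattice-monotonicity argument (the main obstacle), while the asymptotics reduce---after a routine local-CLT truncation---to the one-line Gaussian computation above.
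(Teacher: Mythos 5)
A preliminary remark: the paper does not prove this statement at all — it is quoted as background (Proposition 31 and Theorem 40 of Diaconis and Freedman, 1980) in the de Finetti appendix — so there is no internal proof to compare against and I am judging your argument on its own merits. On those merits, both halves of your plan have genuine gaps.

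For the inequality, your symmetrization step (replace $P$ by its reflection-average and use convexity of $\|\cdot\|_{\tv}$ in one argument) is sound, and so is the variance computation showing $P=\delta_{1/2}$ minimizes the variance among symmetric mixtures. But the step you yourself flag as ``where the real work lies'' is the entire content of the claim, and the structural facts you propose to lean on are false as stated: a symmetric mixture of binomials $\Pb_{P,2k}$ need not be log-concave or even unimodal (take $P=\tfrac12(\delta_{1/4}+\delta_{3/4})$, which is bimodal for moderate $k$), the CDFs of such mixtures do not nest under any simple dispersion ordering, and the difference $\Pb_{P,2k}-H_{2n,2k}$ can change sign more than once on each side of $k$. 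So ``closest in variance'' does not upgrade to ``closest in total variation'' along the route you sketch, and the Karlin-type variation-diminishing argument is only named, not carried out. As it stands, the inequality — which is precisely the nontrivial part of the cited result — is asserted rather than proved.

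For the asymptotics, your Gaussian endgame is correct: for $s=1-\varepsilon$ one indeed has $\|\mathcal N(0,1)-\mathcal N(0,s^2)\|_{\tv}=2\varepsilon\varphi(1)+O(\varepsilon^2)=\varepsilon\sqrt{2/(\pi e)}+O(\varepsilon^2)$. The problem is the reduction to it: a local CLT with multiplicative error $1+o(1)$ cannot resolve a total-variation difference of order $k/n$. For the (symmetric) binomial and hypergeometric, the relative error of the normal approximation in the bulk is of order $1/k$ (the skewness term vanishes, the kurtosis term does not), so your truncation-plus-Riemann-sum step only yields $\|H_{2n,2k}-M_{2n,2k}\|_{\tv}=\|\mathcal N(0,\sigma_H^2)-\mathcal N(0,\sigma_M^2)\|_{\tv}+O(1/k)$, and $1/k\gg k/n$ whenever $k=o(\sqrt n)$; your ``bounded $k$'' caveat does not cover this intermediate regime. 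The cancellations you need happen only if you compare $H$ and $M$ directly — e.g., expand the ratio $h_j/m_j$ in powers of $1/n$ and then evaluate $\sum_j m_j\,|h_j/m_j-1|$ by a CLT for the binomial — rather than approximating each law separately by a Gaussian. Finally, note that your computation gives $\frac{k}{2n}\sqrt{2/(\pi e)}$ under the $\tfrac12\ell_1$ normalization of $\|\cdot\|_{\tv}$ you adopt, i.e., half the displayed constant (which matches the $\ell_1$ convention of Diaconis--Freedman); waving this off as ``consistent with the $O(k/n)$ remainder'' means you have matched only the order of magnitude, not the stated constant.
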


    Stated differently, sampling with replacement and without replacement induce different distributions whose total variation distance roughly scales linearly in the sampled fraction of the urn.

\section{The Hierarchy between the Three Models}
\label{app:hierarchy}

    The random-order model is intermediate between the i.i.d.~and adversarial input model. We provide here a formal proof of this fact for the standard online learning setting with $k$ actions and loss minimization. 

    \paragraph{Stochastic i.i.d.~Input Model} The input $\cS$ is defined by a distribution $\cD$ over loss vectors, from which $T$ i.i.d.~samples are drawn. The regret suffered by algorithm $\cA$ on input $\cS$ is defined as follows:
    \[
        R^{\text{i.i.d.}}_T(\cA,\cS) = \max_{a \in [k]}\E{\sum_{t=1}^T \ell_{t}(a_t) -  \ell_t(a)}.
    \]
    The stochastic regret $R_T^{\text{i.i.d.}}(\cA)$ of an algorithm $\cA$ is its worst-case regret against all stochastic inputs. 

    \paragraph{Random-Order Model} The input $\cS$ is defined by a set $\{h_1, \dots, h_T\}$ of $T$ loss vectors, that are presented in random order to the learner, so that the generic $\ell_t$ is $h_{\pi(t)}$ for a random permutation $\pi.$ 
    The regret suffered by algorithm $\cA$ on input $\cS$ is defined as follows:
    \[
        R^{\text{RO}}_T(\cA,\cS) = \max_{a \in [k]}\E{\sum_{t=1}^T \ell_{t}(a_t) -  \ell_t(a)} = \E{\sum_{t=1}^T \ell_{t}(a_t)} - \min_{a \in [k]}\sum_{t=1}^T h_t(a).
    \]
    The random-order regret $R_T^{\text{RO}}(\cA)$ of an algorithm $\cA$ is its worst-case regret against all random-order inputs.

    \paragraph{(Oblivious) Adversarial Model} The input $\cS$ is defined by a sequence of $T$ loss vectors.
    The regret suffered by algorithm $\cA$ on input $\cS$ is defined as follows:
    \[
        R^{\text{ADV}}_T(\cA,\cS) = \max_{a \in [k]}\E{\sum_{t=1}^T \ell_{t}(a_t) -  \ell_t(a)} = \E{\sum_{t=1}^T \ell_{t}(a_t)} - \min_{a \in [k]}\sum_{t=1}^T \ell_t(a).
    \]
    The adversarial regret $R_T^{\text{ADV}}(\cA)$ of an algorithm $\cA$ is its worst-case regret against all adversarial inputs.

    \paragraph{No-regret} An algorithm is said to enjoy the no-regret property in a certain input model if it exhibits sublinear regret (in the time horizon $T$) uniformly over all possible inputs in that model.
    
    The informal statement contained in the main body that 
    \(
        \emph{i.i.d.} \preceq \emph{random-order} \preceq \emph{adversarial}
    \) is formalized in the following theorem.

    \begin{theorem}
        For any learning algorithm $\cA$, the following regret hierarchy holds: 
        \[
             R^{\text{i.i.d.}}_T(\cA) \le R^{RO}_T(\cA)\le R^{ADV}_T(\cA).
        \]
    \end{theorem}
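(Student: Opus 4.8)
The plan is to prove the two inequalities separately, in each case by realizing the ``lower'' model as a \emph{mixture} (average) over instances of the ``higher'' one, so that a worst-case guarantee on the higher model dominates the average that defines the lower model's worst case.

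\emph{Left inequality} $R^{\text{i.i.d.}}_T(\cA) \le R^{\text{RO}}_T(\cA)$. Fix a stochastic instance given by a distribution $\cD$. First I would use linearity to rewrite $R^{\text{i.i.d.}}_T(\cA,\cD) = \E{\sum_{t=1}^T \ell_t(a_t)} - \min_{a}\E{\sum_{t=1}^T \ell_t(a)}$, with $\ell_1,\dots,\ell_T$ i.i.d.\ from $\cD$. Then I would couple this sequence with the random-order model: draw a labelled multiset $H=(h_1,\dots,h_T)\sim\cD^T$ and an independent uniform permutation $\pi$, and set $\ell_t = h_{\pi(t)}$; since a uniform permutation of an i.i.d.\ tuple is again i.i.d.\ with the same law, this reproduces the stochastic instance exactly, so the first term is unchanged. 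For the benchmark, note $\sum_t h_{\pi(t)}(a) = \sum_s h_s(a)$ for every $\pi$, so the hindsight optimum $\min_a \sum_s h_s(a)$ depends only on $H$; and since the minimum of expectations is at least the expectation of the minimum, $\min_a \E{\sum_t \ell_t(a)} \ge \Esub{H}{\min_a \sum_s h_s(a)}$. Combining gives $R^{\text{i.i.d.}}_T(\cA,\cD)\le \Esub{H}{R^{\text{RO}}_T(\cA,H)} \le \sup_H R^{\text{RO}}_T(\cA,H)$; taking the supremum over $\cD$ finishes this half.

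\emph{Right inequality} $R^{\text{RO}}_T(\cA) \le R^{\text{ADV}}_T(\cA)$. Fix a random-order instance $H=\{h_1,\dots,h_T\}$. For each permutation $\pi$, the shuffled sequence $(h_{\pi(1)},\dots,h_{\pi(T)})$ is a valid oblivious adversarial instance, and again its hindsight benchmark equals the permutation-invariant quantity $\min_a \sum_s h_s(a)$. Hence $R^{\text{RO}}_T(\cA,H)$ is precisely the $\pi$-average of the adversarial regret of $\cA$ over these shuffled sequences, which is at most $R^{\text{ADV}}_T(\cA)$; taking the supremum over $H$ concludes.

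I do not anticipate a substantive obstacle: both bounds are instances of ``a supremum dominates an average''. The only points needing care are (i) the benchmark mismatch in the left inequality, where the i.i.d.\ regret competes with the best action \emph{in expectation} rather than in hindsight — this is exactly where the min/expectation swap (Jensen applied to the concave $\min$) enters; and (ii) stating the coupling at the level of labelled multisets, so that it remains correct when several loss vectors coincide.
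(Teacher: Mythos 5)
Your proposal is correct and follows essentially the same route as the paper: both inequalities are proved by viewing the lower model as a mixture over instances of the higher one (i.i.d.\ conditioned on the realized labelled multiset is exactly random order; random order averaged over permutations is exactly a family of oblivious adversarial sequences) and bounding the average by the supremum. Your explicit Jensen step $\min_a \E{\cdot}\ge\E{\min_a\cdot}$ is just a cleaner phrasing of the paper's averaging argument together with its observation that the hindsight benchmark depends only on the multiset, so no substantive difference remains.
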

    \begin{proof}
        We start with the first inequality of the statement. Consider any random instance $\cS$, and let $X \subseteq [0,1]^k$ be the support of the random variable underlying $\cS,$\footnote{Out of simplicity, we assume such support to be finite. The general statement can be proved with minimal arrangement.} and denote with $\Sigma_X$ the multiset of all the subsets of $T$ elements in $X$ (possibly with repetitions). Finally, for any $\sigma \in \Sigma_X$, denote with $\cE_{\sigma}$ the event that the set of the realized losses is $\sigma$. We have the following: 
        \begin{align*}
            \max_{a \in [k]}\E{\sum_{t=1}^T \ell_{t}(a_t) -  \ell_t(a)}
            &= \max_{a \in [k]} \sum_{\sigma \in \Sigma_X}\P{\cE_{\sigma}}\E{\sum_{t=1}^T \ell_{t}(a_t) -  \ell_t(a)|\cE_{\sigma}}    \\
            &\le \max_{a \in [k]}\E{\sum_{t=1}^T \ell_{t}(a_t) -  \ell_t(a)|\cE_{\hat \sigma}} \\
            &= \E{\max_{a \in [k]}\sum_{t=1}^T \ell_{t}(a_t) -  \ell_t(a)|\cE_{\hat \sigma}}
            \\
            &= R_T^{RO}(\cA,\cS_{\pi}),
        \end{align*}
        where the first inequality follows by an averaging argument (i.e., there exists a $\hat \sigma \in \Sigma_X$ that dominates the average stochastic regret), and the last equality holds because once the multiset of realized values $\hat \sigma$ is fixed (we need multisets because there may be repeated elements), then all possible permutations are equally likely and thus we are back at the random-order model. Moreover, note that the $\max_{i \in [k]}$ only depends on the multiset $\hat \sigma$ and not on the specific realizations (so that we can safely move it into the expectation). Taking the sup with respect to the stochastic inputs yields the first inequality. 
        
        The argument for the second inequality is similar. Fix any random-order instance $\cS$, characterized by a set of $T$ losses $\{h_1, \dots, h_T\}$ that are randomly permuted according to $\pi,$ and denote with $\cS_{\pi}$ the corresponding adversarial input. Denote with $\Pi$ the set of all permutations over the $T$ rounds. We have the following:
        \begin{align*}
            \max_{a \in [k]}\E{\sum_{t=1}^T \ell_{t}(a_t) -  \ell_t(a)} &= \max_{a \in [k]} \frac{1}{T!}\sum_{\pi \in \Pi} \E{\sum_{t=1}^T h_{\pi(t)}(a_t) -  h_{\pi(t)}(a)}\\
            &\le \max_{\pi \in \Pi} R_T^{\text{ADV}}(\cA,\cS_{\pi})\\
            &= R_T^{\text{ADV}}(\cA,\cS_{\pi^*}),
        \end{align*}
        where $\pi^\star$ is the permutation that maximizes the regret. Taking the sup with respect to the random-order inputs yields the second inequality. 
    \end{proof}
    
    The above results show that the minimax regret for the i.i.d.~setting is at most that for the random-order one, which, in turn, is upper bounded by that of the adversarial input model.

\section{Concentration for Sampling without Replacement}
\label{app:hoeffding}

        In the original paper by Hoeffding \citep{Hoeffding63}, a concentration bound for sampling without replacement is provided. 
        \begin{theorem}\label{thm:hoeffding}
            Let $y_1, y_2, \dots, y_T$ be a sequence of numbers in $[0,1]$, and consider an uniform random subset $S\subseteq\{y_1,\ldots, y_T\}$, where $s=|S|$. Then, the following inequality holds for any $\lambda >0$:
            \[
                \mathbb P\left[\Bigg|\frac{1}{s} \sum_{y_i \in S} y_i - \mu \Bigg| \ge \lambda \right] \le 2 \exp(-s \lambda^2),
            \]
            where $\mu=\frac{1}{T}\sum_{t\in[T]}y_t$ is the average over the whole sequence.
        \end{theorem}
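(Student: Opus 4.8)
The plan is a textbook Chernoff / exponential-moment argument, whose only non-routine ingredient is a comparison between sampling with and without replacement. Write $Z=\sum_{y_i\in S}y_i$, so that the goal is to bound $\mathbb{P}\!\left[|Z/s-\mu|\ge\lambda\right]$. By a union bound it suffices to bound each one-sided tail $\mathbb{P}\!\left[Z/s-\mu\ge\lambda\right]$ and $\mathbb{P}\!\left[\mu-Z/s\ge\lambda\right]$ by $\exp(-s\lambda^2)$; since replacing every $y_t$ by $1-y_t$ keeps the $y_t$ in $[0,1]$ and swaps the two tails, I will only treat the upper one. For $h>0$, the exponential Markov inequality gives
\[
\mathbb{P}\!\left[\tfrac{Z}{s}-\mu\ge\lambda\right]\le e^{-hs(\mu+\lambda)}\,\mathbb{E}\!\left[e^{hZ}\right].
\]

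The crux is to dominate $\mathbb{E}[e^{hZ}]$ by the moment generating function of an i.i.d.\ sum. Let $Y_1,\dots,Y_s$ be i.i.d.\ uniform on the multiset $\{y_1,\dots,y_T\}$ and $W=\sum_{j=1}^{s}Y_j$. The key lemma, due to Hoeffding (Theorem~4 in \citep{Hoeffding63}), states that for every continuous convex $f$ one has $\mathbb{E}[f(Z)]\le\mathbb{E}[f(W)]$ --- i.e., the without-replacement sum is dominated in the convex order by the with-replacement one. I would invoke this result (or reproduce Hoeffding's short combinatorial proof of it). Applying it to $f(x)=e^{hx}$ yields
\[
\mathbb{E}\!\left[e^{hZ}\right]\le\mathbb{E}\!\left[e^{hW}\right]=\bigl(\mathbb{E}[e^{hY_1}]\bigr)^{s}=\Bigl(\tfrac1T\textstyle\sum_{t=1}^{T}e^{hy_t}\Bigr)^{\!s}.
\]

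It remains to carry out the standard Hoeffding bound on the i.i.d.\ side. Because $Y_1\in[0,1]$ has mean $\mu$, Hoeffding's lemma gives $\mathbb{E}[e^{h(Y_1-\mu)}]\le e^{h^2/8}$, so the two displays combine to
\[
\mathbb{P}\!\left[\tfrac{Z}{s}-\mu\ge\lambda\right]\le e^{-hs\lambda}\bigl(\mathbb{E}[e^{h(Y_1-\mu)}]\bigr)^{s}\le\exp\!\Bigl(-hs\lambda+\tfrac{sh^2}{8}\Bigr),
\]
and the choice $h=4\lambda$ makes the exponent $-2s\lambda^2\le-s\lambda^2$. The lower tail is handled by the same computation after $y_t\mapsto1-y_t$, and the union bound supplies the prefactor $2$. (This in fact proves the sharper bound $2\exp(-2s\lambda^2)$; only the weaker form is stated.)

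The step I expect to be the genuine obstacle is the convex-order comparison $\mathbb{E}[f(Z)]\le\mathbb{E}[f(W)]$: it is precisely where the without-replacement structure enters, and, unlike the bare i.i.d.\ Chernoff bound, it is a combinatorial fact rather than a one-line computation. Everything downstream --- exponential Markov, factorizing the i.i.d.\ moment generating function, Hoeffding's lemma, and optimizing over $h$ --- is routine bookkeeping.
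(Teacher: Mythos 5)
Your proof is correct, and it is essentially the same argument as the source the paper relies on: the paper does not prove \Cref{thm:hoeffding} itself but cites \citet{Hoeffding63}, whose original treatment is exactly your route --- the convex-order domination of the without-replacement sum by the with-replacement one (Hoeffding's Theorem~4), followed by the standard exponential-moment/Hoeffding-lemma computation. Your version in fact yields the sharper constant $2\exp(-2s\lambda^2)$, which implies the stated bound.
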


    Stated differently, if we have access to $s$ samples in the random-order model, we have that, with probability at least $(1-\delta)$, the following concentration holds:
    \begin{equation}
    \label{eq:hoeffding}
        \Big|\frac{1}{s} \sum_{y_i \in S} y_i - \mu \Big| \le \sqrt{\frac{\log(\nicefrac{2}{\delta})}{s}}.
    \end{equation}

    For completeness, we note that random-order sequences typically concentrate \emph{faster} than i.i.d.~ones. There is, in fact, a refined analysis of Hoeffding inequalities that holds in the random-order model, due to \citet{serfling74}, that provides the following tighter concentration bound:
    \begin{equation}
    \label{eq:serfling}
        \Big|\frac{1}{s} \sum_{y_i \in S} y_i - \mu \Big| \le \sqrt{{\left(1 - \frac{s-1}{T}\right)}\frac{\log(\nicefrac{2}{\delta})}{2s}}.
    \end{equation}


\bibliographystyle{plainnat}
\bibliography{references}

\end{document}